
\typeout{IJCAI--24 Instructions for Authors}


\documentclass{article}
\pdfpagewidth=8.5in
\pdfpageheight=11in

\usepackage{ijcai24}

\usepackage{times}
\usepackage{soul}
\usepackage{url}
\usepackage{hyperref}
\usepackage[utf8]{inputenc}
\usepackage[small]{caption}
\usepackage{graphicx}
\usepackage{amsmath}
\usepackage{amsthm}
\usepackage{bm}
\usepackage{booktabs}
\usepackage{algorithm}
\usepackage{algorithmic}
\usepackage{amssymb}
\usepackage[switch]{lineno}
\usepackage{xcolor}
\usepackage{multirow}
\usepackage{makecell}
\usepackage{subfigure}

\urlstyle{same}


\newtheorem{definition}{Definition}
\newtheorem{theorem}{Theorem}
\newtheorem{proposition}{Proposition}
\newtheorem{appendixproposition}{Proposition}





\pdfinfo{
/TemplateVersion (IJCAI.2024.0)
}

\title{Deep Frequency Derivative Learning for Non-stationary
Time Series Forecasting }



\author{
Wei Fan$^1$
\and
Kun Yi$^2$\footnotemark[1]\and
Hangting Ye$^{3}$\and
Zhiyuan Ning$^{4}$\and
Qi Zhang$^5$\and
Ning An$^6$
\affiliations
$^1$University of Oxford
$^2$Beijing Institute of Technology
$^3$Jilin University\\
$^4$Chinese Academy of Science
$^5$Tongji University
$^6$Hefei University of Technology\\
\emails
wei.fan@wrh.ox.ac.uk,
yikun@bit.edu.cn,
yeht2118@mails.jlu.edu.cn,\\
ningzhiyuan@cnic.cn,
zhangqi\_cs@tongji.edu.cn,
ning.g.an@acm.org
}

\begin{document}

\maketitle

\renewcommand{\thefootnote}{\fnsymbol{footnote}}
\footnotetext[1]{Corresponding Author.}
\setcounter{footnote}{0}
\renewcommand{\thefootnote}{\arabic{footnote}}

\begin{abstract}
While most time series are non-stationary, it is inevitable for models to face the distribution shift issue in time series forecasting. Existing solutions manipulate statistical measures (usually mean and std.) to adjust time series distribution. However, these operations can be theoretically seen as the transformation towards zero frequency component of the spectrum which cannot reveal full distribution information and would further lead to \textit{information utilization bottleneck} in normalization, thus hindering forecasting performance. To address this problem, we propose to utilize \textit{the whole frequency spectrum} to transform time series to make full use of data distribution from the frequency perspective. We present a \textit{deep frequency derivative learning} framework, \textsc{DeRiTS}, for non-stationary time series forecasting. Specifically, \textsc{DeRiTS} is built upon a novel reversible transformation, namely \textit{Frequency Derivative Transformation} (FDT) that makes signals \textit{derived} in the frequency domain to acquire more stationary frequency representations. Then, we propose the  \textit{Order-adaptive Fourier Convolution Network} to conduct adaptive frequency filtering and learning. Furthermore, we organize \textsc{DeRiTS} as a \textit{parallel-stacked} architecture for the multi-order derivation and fusion for forecasting. 
Finally, we conduct extensive experiments on several datasets which show the consistent superiority in both time series forecasting and shift alleviation.

\end{abstract}

\section{Introduction} \label{sec:intro}

Time series forecasting has been playing an important role in a variety of real-world industries, such as traffic analysis~\cite{ben1998dynamit}, weather prediction~\cite{lorenz1956empirical}, financial estimation~\cite{king1966market,ariyo2014stock}, energy planning~\cite{fan2024dewp}, etc. 
Following by classic
statistical methods (e.g., ARIMA~\cite{whittle1963prediction}), many deep machine learning-based time series forecasting methods~\cite{salinas2020deepar,han2024bigst,zhang2023irregular} have recently achieved superior performance in different scenarios. 
Despite the remarkable success, the non-stationarity widely existing in time series data has still been a critical but under-addressed challenge for accurate forecasting~\cite{priestley1969test,huang1998empirical,brockwell2009time}.

\begin{figure}
\centering
\subfigure[Transformation with only the  zero frequency component.]{\includegraphics[width=0.45\linewidth]{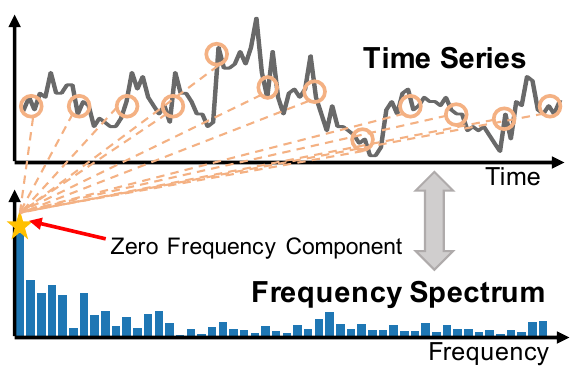} \label{fig:1a} } 
\subfigure[Transformation using the  whole frequency spectrum.]{ \includegraphics[width=0.45\linewidth]{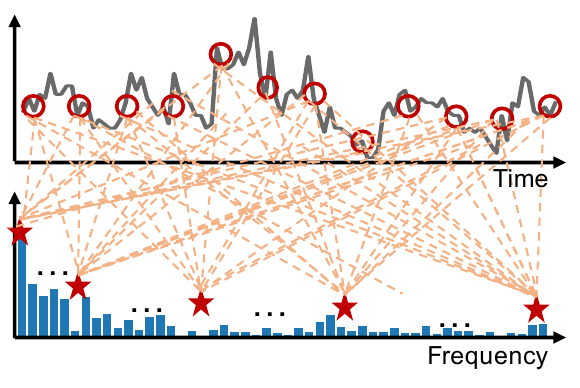} \label{fig:1b} }
\vspace{-5mm}
\caption{Given one time series and its frequency spectrum, the main comparison between existing works (a) and our method (b).}
\label{fig:motivation}
\vspace{-4mm}
\end{figure}

Since time series data are usually collected at a high frequency over a long duration, such non-stationary sequences with millions of timesteps inevitably let forecasting models face the distribution shifts over time. This would lead to performance degradation at test time~\cite{kim2021reversible} due to the covariate shift or the conditional shift~\cite{woo2022deeptime}. For this issue, pioneer works~\cite{ogasawara2010adaptive} propose to normalize time series data with global statistics; one recent work \cite{kim2021reversible} proposes to use instance statistics to normalize time series against distribution shifts. Then, some work brings statistical information into self-attention computation~\cite{liu2022non}; another work~\cite{fan2023dish}  transform time series with learnable statistics and consider shifts input and output sequences, and \cite{liu2023adaptive} utilize predicted sliced statistics for adaptive normalization. 

Most of these existing works focus on transforming each timestep of time series with certain statistics (usually mean and std.) After our careful theoretical analysis\footnote{We leave the details of our theoretical analysis in Appendix \ref{proof_mean}}, we have found that these operations can actually be regarded as the \textit{normalization towards the zero frequency component of the spectrum} in the frequency domain, as shown in Figure \ref{fig:1a}. However, they cannot fully utilize distribution information of time series signals; moreover, this would lead to {\textit{information utilization bottleneck}} in normalization and thus hinders the performance of time series forecasting. 
To address this problem, we propose to utilize \textit{the whole frequency spectrum} for the transformation of time series, to make full use of distribution information of time series from the frequency perspective and in the meanwhile transform time series into more stationary space thus making more accurate forecasting.
Figure \ref{fig:1b} has shown our method with whole frequency spectrum. 



Motivated by this view, we then present a \textit{deep frequency derivative learning} framework, \textsc{DeRiTS}, for non-stationary time series forecasting.
The core idea of \textsc{DeRiTS} lies in two folds: (i) employing the whole frequency spectrum to take the derivative of time series signals, and (ii) learning frequency dependencies on more stationary transformed representations.
Specifically, we first propose a novel transformation for time series signals in \textsc{DeRiTS}, namely \textit{Frequency Derivative Transformation} (FDT), which mainly includes two stages.
In the first stage, the raw signals in the time domain are transformed into the frequency domain with Fourier transform~\cite{nussbaumer1982fast} for further learning. 
In the second stage, the transformed frequency components are derived with respect to timestamps to get more stationary frequency representations.
Inspired by the derivative in mathematics~\cite{hirsa2013introduction}, FDT let models aim for modeling gradients of signals rather than raw input signals, which could mitigate their burden of forecasting with distribution shifts by resolving non-stationary factors  (e.g., the shift of trends) in the original time series through one- or high-order derivation.

After acquiring more stationary representations, we further propose a novel architecture, \textit{Order-adaptive Fourier Convolution Network} (OFCN) in \textsc{DeRiTS} for the frequency filtering and dependency learning to accomplish the forecasting. Concretely, OFCN is composed of (i) Order-adaptive frequency filter that adaptively extracts meaningful patterns by excluding high-frequency noises for derived signals of different orders, and (ii) Fourier convolutions that conduct dependency mappings and learning for complex values in the frequency domain.
Since OFCN is operating in the projection space by FDT, we thus utilize the \textit{{inverse} Frequency Derivative Transformation} to recover the predicted frequency components back to the original time domain. Inspired by previous work~\cite{kim2022reversible}, we let all stages of FDT fully reversible and symmetric and make OFCN predict in the more stationary frequency space, which reveals our superiority in enhancing forecasting against distribution shifts.
Furthermore, in order for the \textit{multi-order} derivative learning, we have organized \textsc{DeRiTS} as a \textit{parallel-stacked} architecture to fuse representations of different orders. Specifically, \textsc{DeRiTS} is composed of several parallel branches, each of which represents an order of derivation and prediction corresponding for its FDT and OFCN. Note that the Fourier convolution adapted in each branch is not parameter-sharing and after the distinct processing of different branches, the outputs are fused to achieve the final time series forecasting.
In summary, our main contribution can be listed as follows:
\begin{itemize}
    \item Motivated by our theoretical analysis towards existing time series normalization techniques from the frequency spectrum perspective, we propose to utilize the \textit{whole frequency spectrum} for the transformation of time series. 
    \item We present a \textit{deep frequency derivative learning} framework, namely \textsc{DeRiTS}, built upon our proposed \textit{Frequency Derivative Transformation} (with its inverse)  for non-stationary time series forecasting.
    
    \item We introduce the novel \textit{Order-adaptive Fourier Convolution Network}, for the frequency dependency learning and organize \textsc{DeRiTS} as a \textit{parallel-stacked} architecture to fuse \textit{multi-order} representations for forecasting.

    \item We have conducted extensive experiments on seven real-world datasets, which have demonstrated the consistent superiority compared with state-of-the-art methods in both time series forecasting and shift alleviation.

\end{itemize}

\section{Related Work}
\subsection{Time Series Forecasting with Non-stationarity}
Time series forecasting is a longstanding research topic. Traditionally, researchers have proposed statistical approaches, including exponentially weighted moving averages~\cite{holt1957forecasting} and ARMA~\cite{whittle1951hypothesis}.
Recently, with the advanced development of deep learning~\cite{chen2023tabcaps,fan2020autofs,10187687,ning2021lightcake,chen2024excelformer,fan2021interactive,pu2022meta,ning2022graph}, many deep time series forecasting methods have been developed, including RNN-based methods (e.g., deepAR~\cite{salinas2020deepar}, LSTNet~\cite{Lai2018}), CNN-based methods (e.g., SCINet~\cite{liu2022scinet}, TCN~\cite{bai2018}), MLP-based Methods (e.g., DLinear~\cite{zeng2022transformers}, N-BEATS~\cite{Oreshkin2020N-BEATS}) and Transformer-based methods (e.g., Autoformer~\cite{autoformer21}, PatchTST~\cite{PatchTST2023}). While time series are non-stationary, existing works try normalize time series with global statistics~\cite{ogasawara2010adaptive}, instance statistics~\cite{kim2021reversible}, learnable statistics~\cite{fan2023dish} and sliced statistics~\cite{liu2023adaptive} in order to relieve the influence of distribution shift on forecasting. Other works bring time-index information~\cite{woo2022deeptime} or statistical information into network architectures~\cite{liu2022non,fan2024addressing} to overcome the shifts.



\subsection{Frequency Analysis in Time Series Modeling}
The frequency analysis has been widely used to extract knowledge of the frequency domain in time series modeling and forecasting. Specifically, SFM \cite{ZhangAQ17} adopts Discrete Fourier Transform to decomposes the hidden state of time series by LSTM into frequency components;  StemGNN~\cite{Cao2020} adopts Graph Fourier Transform to perform graph convolutions and uses Discrete Fourier Transform to computes series-wise correlations. Autoformer~\cite{autoformer21} replaces self-attention in Transformer~\cite{vaswani2017attention} and proposes the auto-correlation mechanism implemented by Fast Fourier Transform. FEDformer~\cite{fedformer22} introduces Discrete Fourier Transform-based frequency enhanced attention by acquiring the attentive weights by frequency components and then computing the weighted sum in the frequency domain. In addition, \cite{cost22} transforms hidden features of time series into the frequency domain with Discrete Fourier Transform; \cite{depts_2022} uses Discrete Cosine Transform to extract periodic information; \cite{frets_2023} combines Fast Fourier Transform (FFT) with MLPs; \cite{yi2024fouriergnn} combines FFT and graph neural network for time series forecasting~\cite{yi2023survey}.

\section{Problem Formulation}

\paragraph{Time Series Forecasting}
Let $\bm{x} = [ \bm{x}_1; \bm{x}_2; \cdots; \bm{x}_T ] \in \mathbb{R}^{T \times D}$ be regularly sampled multi-variate time series with $T$ timestamps and $D$ variates, where $\bm{x}_t \in \mathbb{R}^D$ denotes the multi-variate values at timestamp $t$. In the task of time series forecasting,
we use $\mathbf{X}_t \in \mathbb{R}^{L \times D}$ to denote the lookback window, a length-$L$ segment of $\bm{x}$ ending at timestamp $t$ (exclusive), namely $\mathbf{X}_t = \bm{x}_{t-L:t} = [\bm{x}_{t-L}; \bm{x}_{t-L+1}; \cdots; \bm{x}_{t-1}]$.
Similarly, we represent the horizon window as a length-$H$ segment of $\bm{x}$ starting from timestamp $t$ (inclusive) as $\mathbf{Y}_t$, so we have $\mathbf{Y}_t = \bm{x}_{t:t+H} = [\bm{x}_{t}; \bm{x}_{t+1}; \cdots; \bm{x}_{t+H-1}]$.
The classic time series forecasting formulation is to project
lookback values $\mathbf{X}_t$ into horizon values $\mathbf{Y}_t $.
Specifically, a typical forecasting model $F_\theta : \mathbb{R}^{L \times D} \to \mathbb{R}^{H \times D}$ produces forecasts by $ \hat{\mathbf{Y}}_t  = f_\theta( \mathbf{X}_t  )$ where $\hat{\mathbf{Y}}_t$ stands for the forecasting results and $\theta$ encapsulates the model parameters.

\paragraph{Non-stationarity and Distribution Shifts}
In this paper, we aim to study the problem of non-stationarity in deep time series forecasting. As aforementioned in Section \ref{sec:intro}, long time series with millions of timesteps let forecasting models face distribution shifts over time due to the non-stationarity. The distribution shifts in time series forecasting are usually the covariate shift~\cite{wiles2021fine,woo2022deeptime}. Specifically,
given a stochastic process, let $p\left(x_t, x_{t-1}, \ldots, x_{t-L+1}\right)$ be the unconditional joint distribution of a length $L$ segment where $x_t$ is the value of univariate time series at timestamp $t$.
The stochastic process experiences \textit{covariate shift} if any two segments are drawn from different distributions, i.e. 
$p\left( x_{t-L},x_{t-L+1} \ldots, x_{t-1} \right) \neq p\left( x_{t^{\prime}-L}, x_{t^{\prime}-L+1}, \ldots,  x_{t^{\prime}-1} \right),  \forall t \neq t^{\prime}$. 
Subsequently,
let $p\left(x_{t} \mid x_{t-1}, \ldots, x_{t-L}\right)$ represents the conditional distribution of $x_{t}$, such a stochastic process experiences \textit{conditional shift} if two segments have different conditional distributions, i.e. 
$p\left(x_{t} \mid x_{t-1}, \ldots, x_{t-L+1}, x_{t-L}\right) \neq$ $p\left(x_{t^{\prime}} \mid  x_{t^{\prime}-1}, \ldots, x_{t^{\prime}-L+1},x_{t^{\prime}-L}\right), \forall t \neq t^{\prime}$.

\section{Methodology}
In this section, we elaborate on our proposed \textit{deep frequency derivative learning} framework, \textsc{DeRiTS}, designed for non-stationary time series forecasting. First, we introduce our novel reversible transformation, \textit{Frequency Derivative Transformation} (FDT) in Section \ref{sec:method_fdt}. Then, to fuse multi-order information, we present the parallel-stacked frequency derivative learning architecture in Section \ref{sec:fre_arch}. Finally, we introduce our \textit{Order-adaptive Fourier Convolution Network} (OFCN) for frequency learning in Section \ref{sec:FCN}.

\begin{figure*}
    \centering
    \includegraphics[width=0.98\textwidth]{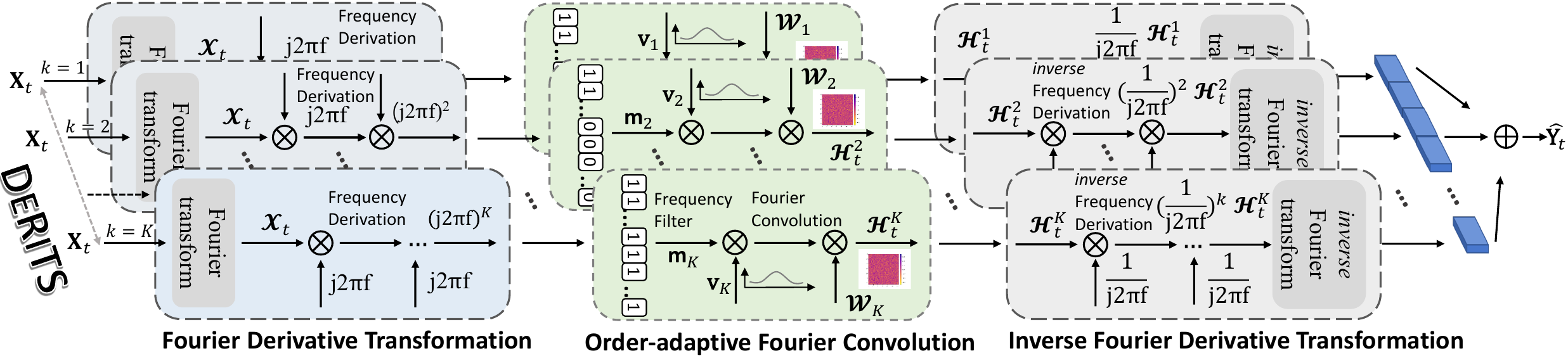}
    \caption{The main architecture of \textsc{DeRiTS}.}
    \label{fig:model}
\end{figure*}

\subsection{Frequency Derivative Transformation} \label{sec:method_fdt}
As aforementioned in Section \ref{sec:intro}, to fully utilize the whole frequency spectrum for the transformation of time series with sufficient distribution information, we propose the novel \textit{Frequency Derivative Transformation} (FDT) to achieve more stationary frequency representations of time series signals. 
For this aim, FDT mainly includes two distinct stages respectively for domain transformation and frequency derivation.  

\subsubsection{Domain Transformation} In the first stage, to be specific, we make use of fast Fourier transform~\cite{nussbaumer1982fast} to enable the decomposition of time series signals from the time domain into their inherent frequency components. Formally, given the time domain input signals $X(t)$, we convert it into the frequency domain by: 
\begin{equation}\label{fft}
\begin{aligned}
 \mathcal{X}(f)=&\mathcal{F}(X(t)) = \int_{-\infty }^{\infty} {X}(t) e^{-j2\pi ft}\mathrm{d}t \\
 =& \int_{-\infty }^{\infty} {X}(t) \cos(2\pi ft)\mathrm{d}t + j \int_{-\infty }^{\infty} {X}(t) \sin(2\pi ft)\mathrm{d}t,
 \end{aligned}
\end{equation}
where $\mathcal{F}$ is the fast Fourier transform, $f$ is the frequency variable, $t$ is the integral variable, and $j$ is the imaginary unit, defined as the square root of -1; 
$\int_{-\infty }^{\infty} {X}(t) \cos(2\pi ft)\mathrm{d}t$ is the real part of $\mathcal{X}$ and is abbreviated as $Re(\mathcal{X})$;  $\int_{-\infty }^{\infty} {X}(t) \sin(2\pi ft)\mathrm{d}t$ is the imaginary part and is abbreviated as $Im(\mathcal{X})$. After that we can rewrite $\mathcal{X}$ as $\mathcal{X}=Re(\mathcal{X})+jIm(\mathcal{X})$. 

\subsubsection{Frequency Derivation} In the second stage, with the transformed frequency components, we propose to utilize the whole frequency spectrum for the signal derivation, in order to represent time series in a more stationary space. The basic idea is to perform our proposed \textit{Fourier Derivative Operator} in the frequency domain, which is defined as follows: 
\begin{definition}[\textbf{Fourier Derivative Operator}]
    Given the time domain input signals $X(t)$ and its corresponding frequency components $\mathcal{X}(f)$, we then define $\mathcal{R}(\mathcal{X}(f)) := (j2\pi f)\mathcal{X}(f)$ as the {Fourier Derivative Operator} (FDO), where $f$ is the frequency variable and $j$ is the imaginary unit. 
\end{definition}
In the derivation, different order usually represents different signal representations. We propose to incorporate multi-order information in \textsc{DeRiTS} to further enhance the forecasting. For this aim, we extend above definition and further define the \textit{$k$-order Fourier Derivative Operator} $\mathcal{R}_k$ as:
\begin{equation}
    \mathcal{R}_k(\mathcal{X}(f)) =  (j2\pi f)^k\mathcal{X}(f).
\end{equation}
With such two stages, we can finally write the \textit{$k$-order Frequency Derivation Transformation} ${\rm FDT}_k$  as:
\begin{equation} \label{eq:FDT}
    {\rm \operatorname{FDT}}_k(X(t)) = (j2\pi f)^k \mathcal{F}(X(t))
\end{equation}
where $X(t)$ is the time domain input signal; $\mathcal{F}$ stands for fast Fourier transform and $f$ is the frequency variable.

\begin{proposition}
Given $X(t)$ in the time domain and $\mathcal{X}(f)$ in the frequency domain correspondingly,
the $k$-order Fourier Derivative Operator on $\mathcal{X}(f)$ is equivalent to $k$-order derivation on $X(t)$ with respect to $t$ in the time domain, written by:
\begin{equation}
    (j2\pi f)^k\mathcal{X}(f) = \mathcal{F}(\frac{\mathrm{d}^k X(t)}{\mathrm{d} t^k} ),
\end{equation}
where $\mathcal{F}$ is Fourier transform, $\frac{\mathrm{d}^k}{\mathrm{d} t^k}$ is $k$-order derivative with respect to $t$, and $j$ is the imaginary unit.
\end{proposition}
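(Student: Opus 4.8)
The plan is to prove the identity by induction on the derivative order $k$. The base case $k=1$ reduces to the classical rule that differentiation in the time domain corresponds to multiplication by $j2\pi f$ in the frequency domain, which I would obtain directly from the definition in \eqref{fft} via a single integration by parts; the inductive step then follows by applying the base case to the $(k-1)$-th derivative. As an independent cross-check I would also sketch the shorter argument that differentiates the inverse Fourier transform under the integral sign.

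For the base case, I would start from
\begin{equation}
\mathcal{F}\!\left(\frac{\mathrm{d}X(t)}{\mathrm{d}t}\right) = \int_{-\infty}^{\infty} \frac{\mathrm{d}X(t)}{\mathrm{d}t}\, e^{-j2\pi ft}\,\mathrm{d}t,
\end{equation}
and integrate by parts, taking $e^{-j2\pi ft}$ as the factor to differentiate and $\mathrm{d}X(t)/\mathrm{d}t$ as the factor to integrate, to get
\begin{equation}
\mathcal{F}\!\left(\frac{\mathrm{d}X(t)}{\mathrm{d}t}\right) = \Big[X(t)e^{-j2\pi ft}\Big]_{t=-\infty}^{t=\infty} + (j2\pi f)\int_{-\infty}^{\infty} X(t)\, e^{-j2\pi ft}\,\mathrm{d}t .
\end{equation}
Under the standard assumption that $X(t)\to 0$ as $|t|\to\infty$ (which holds for any absolutely integrable signal whose derivative is also integrable), the boundary term vanishes and the remaining integral is $\mathcal{X}(f)$ by \eqref{fft}, so $\mathcal{F}(\mathrm{d}X/\mathrm{d}t) = (j2\pi f)\mathcal{X}(f)$, i.e. exactly the Fourier Derivative Operator $\mathcal{R}_1$.

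For the inductive step, suppose $\mathcal{F}\big(\mathrm{d}^{k-1}X/\mathrm{d}t^{k-1}\big) = (j2\pi f)^{k-1}\mathcal{X}(f)$. Applying the base case to the signal $\mathrm{d}^{k-1}X/\mathrm{d}t^{k-1}$ (whose own derivative is $\mathrm{d}^{k}X/\mathrm{d}t^{k}$) gives $\mathcal{F}\big(\mathrm{d}^{k}X/\mathrm{d}t^{k}\big) = (j2\pi f)\,\mathcal{F}\big(\mathrm{d}^{k-1}X/\mathrm{d}t^{k-1}\big) = (j2\pi f)^{k}\mathcal{X}(f) = \mathcal{R}_k(\mathcal{X}(f))$, which matches the $k$-order Fourier Derivative Operator and hence $\operatorname{FDT}_k$ in \eqref{eq:FDT}. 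This completes the induction. For the alternative argument, I would write $X(t)=\int_{-\infty}^{\infty}\mathcal{X}(f)e^{j2\pi ft}\,\mathrm{d}f$, differentiate $k$ times under the integral sign to bring down a factor $(j2\pi f)^{k}$, and invoke uniqueness of the Fourier transform to identify $(j2\pi f)^{k}\mathcal{X}(f)$ as $\mathcal{F}(\mathrm{d}^{k}X/\mathrm{d}t^{k})$.

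The main obstacle is not algebraic but analytic: one has to justify the vanishing of every boundary term (equivalently, sufficient decay of all intermediate derivatives $\mathrm{d}^{i}X/\mathrm{d}t^{i}$ for $i<k$) and, in the second argument, the legality of differentiating under the integral sign (which needs $f^{k}\mathcal{X}(f)$ to be absolutely integrable). I would state these as mild regularity hypotheses and note that in the discrete, finite-length setting actually used by \textsc{DeRiTS} they are automatically satisfied — the integrals become finite sums over a periodic extension — so the same identity carries over to the DFT with $j2\pi f$ replaced by the corresponding discrete frequency multiplier, keeping the proposition consistent with the implementation.
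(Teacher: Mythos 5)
Your proof is correct, but your primary route differs from the paper's. You prove the base case by integration by parts on the \emph{forward} Fourier integral $\int \frac{\mathrm{d}X}{\mathrm{d}t}e^{-j2\pi ft}\mathrm{d}t$, killing the boundary term via decay of $X$, and then formalize the extension to order $k$ as an induction. The paper instead writes $X(t)$ via the \emph{inverse} Fourier transform, differentiates under the integral sign to pull down a factor of $(j2\pi f)$, repeats this once more to get the second-order case, and concludes the general case ``by analogy'' --- i.e., exactly the argument you relegate to a cross-check. The trade-offs: your integration-by-parts argument makes the analytic hypotheses explicit (absolute integrability of $X$ and its derivatives, vanishing of all intermediate boundary terms), and your induction is a cleaner formalization than the paper's ``by analogy''; the paper's route is shorter and needs no boundary-term discussion, but silently assumes one may differentiate under the integral sign, which requires $f^k\mathcal{X}(f)$ to be integrable --- a hypothesis you correctly flag. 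Your closing remark that the discrete, finite-length setting used in the implementation sidesteps all of these regularity issues is a useful observation that the paper does not make.
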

We leave the detailed proof in Appendix \ref{proof_fdt}. With such a equivalence, we can find out FDT can actually achieve more stationary representations in the lower order by derivation. For example, the shifts caused by a single trend signal in time series can nearly degraded be zero. We include the specific analysis in Appendix \ref{appendix:analysis}. Then, with less distribution shifts and non-stationarity by FDT, the deep networks can have large potential to perform more accurate forecasting.

\subsection{Frequency Derivative Learning Architecture} \label{sec:fre_arch}
The main architecture of \textsc{DeRiTS} is depicted in Figure \ref{fig:model}, which is built upon the Frequency Derivative Transformation and its inverse for the frequency derivative learning.



\paragraph{FDT/iFDT} As mentioned in Section~\ref{sec:method_fdt}, \textsc{DeRiTS} needs to conduct predictions in a more stationary frequency space achieved by frequency derivative transformation. We naturally need to recover the predictions back to the time domain for final forecasting and evaluation. To make FDT fully reversible, we let both stages of FDT reversible, including Fourier transform and Fourier Derivation. Specifically, following Equation (\ref{eq:FDT}), we can symmetrically write the \textit{inverse frequency derivative transformation} (iFDT) of $k$ order as:
\begin{equation}
  {\rm \operatorname{iFDT}}_k(\mathcal{X}(f)) = \mathcal{R}_k^{-1}( \mathcal{X}(f)) = \mathcal{F}^{-1}(\frac{1}{(j2\pi f)^k} (\mathcal{X}(f))),
\end{equation}
where $\mathcal{R}_k^{-1}$ is the inverse process of Fourier Derivative Operator of $k$ order; $\mathcal{F}^{-1}$ is the inverse Fourier transform; $\mathcal{X}(f)$ is the frequency components that need to be recovered to the time domain. Actually, the inverse process $\mathcal{R}_k^{-1}$ is equivalent to an integration operator in the time domain. More details can be found in Appendix~\ref{appendix:analysis}. 

\subsubsection{The Parallel-Stacked Architecture} To conduct the \textit{multi-order} frequency derivation transformation and learning, we have organized our \textsc{DeRiTS} framework as a \textit{parallel-stacked} architecture, where each branch represents an \textit{order} of frequency derivation learning, as shown in Figure \ref{fig:model}.
Let \textsc{DeRiTS} have $K$ branches in total. For each branch, we first take lookback values $\mathbf{X}_t$ to frequency derivative transformation by:
\begin{equation}
    \mathcal{X}_t^k = {\rm FDT}_k (\mathbf{X}_t), \;\; k = 1,2,\cdots, K
\end{equation}
where ${\rm FDT}_k $ is the $k$-order FDT and $\mathcal{X}_t^k$ is the frequency derivative representation for $\mathbf{X}_t^k$ at timestamp $t$.
Then, the learned representations for each branch are taken to the \textit{Fourier Convolution Network} (FCN) for frequency dependency learning. Since our FCN is order-adaptive in each parallel branch, we also take $k$ as input with the computation by:
\begin{equation}
 \mathcal{H}_t^k  = {\rm Order\text{-}adaptiveFourierConvolution}( k, \mathcal{X}_t^k)
\end{equation}
where $\mathcal{H}_t^k$ are the predicted frequency components for $\mathcal{X}_t^k$. Note that  ${\rm FourierConvolution}$ is not parameter-sharing for different branches. After that, we recover the predictions to the time domain by:
\begin{equation}
    \mathbf{H}_t^k = {\rm iFDT}_k ( \mathcal{H}_t^k ), \;\; k = 1,2,\cdots, K
\end{equation}
where $\mathbf{H}_t^k$ is the recovered representation of $k$ order in the time domain. After acquiring it, we finally fuse the multi-order representations from parallel branches for the forecasting with MLP layers, which is given by:
\begin{equation}
  \hat{\mathbf{Y}}_t = {\rm MultilayerPerceptron}(\mathbf{H}_t^1, \mathbf{H}_t^2, \cdots, \mathbf{H}_t^K)
\end{equation}
where $\hat{\mathbf{Y}}_t$ are forecasting results by \textsc{DeRiTS} for evaluation.



    

\begin{table*}[h]
    \centering
    \caption{Overall performance of time series forecasting. We set the lookback window size $L$ as 96 and vary the prediction length $H$ in $\{96, 192, 336, 720\}$; for traffic dataset, the prediction length $H$ is $ \{48, 96, 192, 336 \}$. The best results are in  \underline{bold} and the second best are  {underlined}. Full results of time series forecasting including ILI datasets are included in Appendix \ref{appendix:results} due to space limit.
    }
    \label{tab:long_term}
    \scalebox{0.78}{
    \begin{tabular}{l c|c c|c c|c c|c c| c c|c c |c c |c c}
    
    \toprule
   \multicolumn{2}{c|}{Models}  &\multicolumn{2}{c|}{\textsc{DeRiTS}} &\multicolumn{2}{c|}{FreTS} & \multicolumn{2}{c|}{PatchTST} & \multicolumn{2}{c|}{LTSF-Linear} & \multicolumn{2}{c|}{FEDformer} & \multicolumn{2}{c|}{Autoformer} & \multicolumn{2}{c|}{Informer} & \multicolumn{2}{c}{{NSTransformer}}\\
        \multicolumn{2}{c|}{Metrics}&MAE &RMSE &MAE &RMSE &MAE &RMSE&MAE &RMSE&MAE &RMSE &MAE &RMSE &MAE &RMSE &MAE &RMSE\\
       \midrule
        \multirow{4}*{\rotatebox{90}{Exchange}} & 96 & \textbf{0.035} & \textbf{0.050}& \underline{0.037} &  \underline{0.051}&0.039&  {0.052} & {0.038} &  {0.052}& 0.050& 0.067& 0.054& 0.070& 0.066& 0.084& 0.052 & 0.068\\
         & 192 & \textbf{0.050} & \textbf{0.066} &  \textbf{0.050}&  \underline{0.067}&0.055& 0.074 &  \underline{0.053}&  {0.069}& 0.060& 0.080& 0.065& 0.083& 0.068& 0.088& 0.062& 0.082\\
         & 336 & \textbf{0.060} & {0.083} &  \underline{0.062}&  \underline{0.082}&0.071& 0.093 &  {0.064}&  \textbf{0.080}& 0.070& 0.095& 0.085& 0.101& 0.093& 0.127&0.077 & 0.098\\
         & 720 & \textbf{0.086} & \textbf{0.108} &  \underline{0.088}&  \underline{0.110}&0.132& 0.166 &  {0.092}&  {0.116}& 0.142& 0.174& 0.150& 0.181& 0.117& 0.170& 0.140& 0.172\\
         \midrule
         \multirow{4}*{\rotatebox{90}{Weather}} &  96 & \textbf{0.030} & \textbf{0.070} &  \underline{0.032}&  \underline{0.071}&  {0.034}&  {0.074}& 0.040& 0.081&0.050 & 0.088&0.064 & 0.104& 0.101& 0.139& 0.055& 0.092\\
         & 192 & \textbf{0.037} & \textbf{0.078} &  \underline{0.040}&  \underline{0.081}&  {0.042}&  {0.084} & 0.048& 0.089& 0.051& 0.092& 0.061& 0.103&0.097 & 0.134& 0.057& 0.099\\
          &  336  & \textbf{0.042} & \textbf{0.090} &  \underline{0.046}&  \underline{0.093}&  {0.049}&  {0.094}& 0.056& 0.098& 0.057& 0.100&0.059 &0.101 &0.115 & 0.155& 0.056& 0.099\\
         & 720 & \textbf{0.050} & \textbf{0.094} & \underline{0.055} &  \underline{0.099}&  {0.056}&  {0.102}& 0.065& 0.106& 0.064 & 0.109&0.065 &0.110 &0.132 & 0.175& 0.063& 0.108\\
         \midrule
         \multirow{4}*{\rotatebox{90}{Traffic}} &  48 & {0.019} & {0.037} &  \underline{0.018}&  \underline{0.036}&  \textbf{0.016} &  \textbf{0.032} &0.020 & 0.039& 0.022& 0.036& 0.026& 0.042& 0.023& 0.039&  0.024&  0.038\\
         & 96 & \textbf{0.018} & \textbf{0.034} &  \underline{0.020}&  {0.038}&  \textbf{0.018}&  \underline{0.035} &0.022 & 0.042& 0.023& 0.044& 0.033& 0.050& 0.030& 0.047& 0.025& 0.046\\
          &  192 & \textbf{0.017} & \textbf{0.036} &  \underline{0.019}&  \underline{0.038}&  {0.020} &  {0.039} &  {0.020}& 0.040& 0.022& 0.042& 0.035& 0.053& 0.034& 0.053& 0.030& 0.048\\
         & 336 & \textbf{0.018} & \textbf{0.037} & \underline{0.020} &  \underline{0.039}&  {0.021} &  {0.040} &  {0.021}& 0.041& 0.021& 0.040& 0.032& 0.050& 0.035&0.054 & 0.031& 0.047\\
         \midrule
         \multirow{4}*{\rotatebox{90}{Electricity}} &  96 & \textbf{0.036} & \textbf{0.062} &  \underline{0.039}&  \underline{0.065}&  {0.041}&  {0.067}& 0.045& 0.075& 0.049&0.072 & 0.051& 0.075& 0.094 & 0.124& 0.050& 0.073\\
         & 192 & \textbf{0.038} & \underline{0.065} &  \underline{0.040}&  \textbf{0.064}&  {0.042}&  {0.066}& 0.043& 0.070& 0.049& 0.072& 0.072& 0.099& 0.105& 0.138&0.052 &0.080 \\
          &  336 & \textbf{0.041} & \underline{0.070} & {0.046}& {0.072}&  \underline{0.043}&  \textbf{0.067}&  {0.044}&  {0.071}& 0.051&0.075 & 0.084& 0.115& 0.112& 0.144& 0.064& 0.090\\
         & 720 & \textbf{0.048} & \textbf{0.076} & \underline{0.052} &  \underline{0.079}& 0.055& {0.081}&  {0.054}&  {0.080}&0.055 & {0.077}& 0.088& 0.119& 0.116& 0.148&0.068 & 0.094\\
         \midrule
         \multirow{4}*{\rotatebox{90}{ETTh1}} &  96 & \textbf{0.060} & \textbf{0.086} &  \underline{0.061}&  \underline{0.087}& 0.065& 0.091&  {0.063}&  {0.089}& 0.072&0.096 &0.079 &0.105 &0.093 & 0.121& 0.075& 0.098\\
         & 192 &\underline{0.066} &  \underline{0.093} &  \textbf{0.065}&  \textbf{0.091}& 0.069&  {0.094}&  {0.067}&  {0.094}& 0.076& 0.100&0.086 & 0.114& 0.103& 0.137&0.078 & 0.104\\
          &  336 & \textbf{0.068} & \textbf{0.095} &  \underline{0.070}&  \underline{0.096}&  {0.073} & {0.099} &  {0.075}&  {0.097}& 0.080& 0.105&0.088 & 0.119& 0.112& 0.145& 0.085& 0.109\\
         & 720 & \textbf{0.080}&  \textbf{0.107}& \underline{0.082} &  \underline{0.108}&  {0.087}&  {0.113}&  {0.083}& {0.110} & 0.090& 0.116&0.102 & 0.136& 0.125& 0.157& 0.096& 0.124\\
         \midrule
         \multirow{4}*{\rotatebox{90}{ETTm1}} &  96 & \textbf{0.050} & \textbf{0.075} &  \underline{0.052}&  \underline{0.077}&  {0.055} & 0.082&  {0.055}&  {0.080}& 0.063&0.087 &0.081 &0.109 &0.070 & 0.096 & 0.064& 0.087\\
         & 192 & \textbf{0.055} & \textbf{0.080} &  \underline{0.057}&  \underline{0.083}&  {0.059}&  {0.085}& {0.060}& {0.087}& 0.068& 0.093 &0.083 & 0.112& 0.082& 0.107& 0.070& 0.098\\
          &  336 & \textbf{0.060} & \textbf{0.086} &  \underline{0.062}&  \underline{0.089}&  {0.064} &  {0.091} & {0.065}& {0.093}& 0.075& 0.102&0.091 & 0.125& 0.090& 0.119& 0.079& 0.110\\
         & 720 & \textbf{0.064} & \textbf{0.094} & \underline{0.069} &  \underline{0.096}&  {0.070}&  {0.097}& {0.072}&{0.099} & 0.081& 0.108&0.093 & 0.126 & 0.115& 0.149& 0.086& 0.114\\
         \bottomrule
    \end{tabular}
    }

\end{table*}

\subsection{Order-adaptive Fourier Convolution Network} \label{sec:FCN}
Apart from the frequency derivative transformation, another aim of \textsc{DeRiTS} is to accomplish the dependency learning with the derived signals in the frequency domain. We thus introduce a novel network architecture, namely \textit{Order-adaptive Fourier Convolution Network} (OFCN) to enable the frequency learning. 
Specifically, OFCN is composed of two important components, i.e., order-adaptive frequency filter and Fourier convolutions, which are illustrated as follows:

\paragraph{Order-adaptive Frequency Filter} We aim to fuse  multi-order derived signal information for forecasting, while it is notable that different order corresponds to different frequency comments. 
Since time series include not only meaningful patterns but also high-frequency noises, we develop an order-adaptive frequency filter to enhance the learning process.

Supposing there are $S$ frequencies in $\mathcal{X}_t^k$, we sort $\mathcal{X}_t^k$ on the frequencies in a \textit{descending order} of amplitude for each frequency to get $\mathcal{X'}_t^k$ for $k$ order. Then,  we design an adaptive mask $\mathbf{m}_k$ to concentrate on only $\frac{S}{2^{(K-k)}}$ frequency components of $\mathcal{X}_t^k$ for further learning. We write the adaptive frequency filtering process by:
\begin{equation} \label{eq:filter}
  \mathcal{H'}_{t}^k =  \mathbf{m}_k \odot \mathbf{v}_k \mathcal{X'}_t^k = [ \overbrace{ \underbrace{1,\cdots,1}_{S/2^{(K-k)}}, 0, \cdots,0 }^{S} ] \odot \mathbf{v}_k \mathcal{X'}_t^k
\end{equation}
where $\mathbf{m}_k$ is the mask vector of length $S$ for filtering; $\mathbf{v}_k$ is a randomly-initialized vector of order $k$ which is learnable; $\mathcal{H'}_{t}^k$ are the filtered frequency representations. In particular, Equation (\ref{eq:filter}) is inspired by that the low-order derived representations include more noises than more stationary high-order representations and thus should be filtered. To filter frequencies, we design an exponential-masking mechanism for $\mathbf{m}_k$ to select $\frac{S}{2^{(K-k)}}$ frequencies while filtering others.

\paragraph{Fourier Convolutions} Given the filtered signal representations in the frequency domain, the subsequent step involves acquiring the dependencies for time series forecasting. Considering that the representations are complex value, it is intuitive to devise a network in which all operations are conducted in the frequency domain.
According to the convolution theorem~\cite{1970An}, the Fourier transform of a convolution of two signals equals the pointwise product of their Fourier transforms in the frequency domain. Thus, by conducting a straightforward product in the frequency domain, it is equivalent to perform global convolutions in the time domain which allows the capture of dependencies. 

Accordingly, we employ Fourier convolution layers that involve performing a product in the frequency domain, to capture these dependencies.
Specifically, given $\mathcal{H'}_{t}^k$ achieved by order-adaptive filtering, we compute it as follows:
\begin{equation} \label{eq:fc}
 \mathcal{H}_{t}^k = {\rm FourierConvolution(\mathcal{H'}_{t}^k)}   =  \mathcal{H'}_{t}^k \mathcal{W}_k
\end{equation}
where $\mathcal{W}_k$ is the weighted matrix to conduct the convolutions in the frequency domain; $\mathcal{H}_{t}^k$ is the output by our order-adaptive Fourier convolution network when the order is $k$.
Equation (\ref{eq:fc}) is intuitive which aims to directly learn the dependencies on the filtered components for forecasting.


\section{Experiments}
In this section, in order to evaluate the performance of our model, we conducts extensive experiments on six real-world time series benchmarks to compare with the state-of-the-art time series forecasting methods. 
\subsection{Experimental Setup}
\paragraph{Datasets} We follow previous work~\cite{autoformer21,fedformer22,PatchTST2023,frets_2023} to evaluate our \textsc{DeRiTS} on different representative datasets from various application scenarios, including Electricity~\cite{asuncion2007uci}, Traffic~\cite{autoformer21}, ETT~\cite{zhou2021informer}, Exchange~\cite{Lai2018}, ILI~\cite{autoformer21}, and Weather~\cite{autoformer21}. We preprocess all datasets following the recent frequency learning work \cite{frets_2023} to normalize the datasets and split the datasets into training, validation, and test sets by the ratio of 7:2:1. We leave more dataset details in Appendix \ref{appendix:dataset}.

\begin{table}[h]
    \centering
    \vspace{-2mm}
    \caption{Performance comparisons on MAE and RMSE with state-of-the-art normalization techniques in  time series forecasting taking LTSF-Linear as the backbone.
    }
    \label{table:results_with_fdt}
    \resizebox{\linewidth}{!}
    {
    \begin{tabular}{l c|c c|c c|c c|c c}
    
    \toprule
       \multicolumn{2}{c|}{Models}  &\multicolumn{2}{c|}{{LTSF-Linear}} &\multicolumn{2}{c|}{+RevIN} & \multicolumn{2}{c|}{+Dish-TS} & \multicolumn{2}{c}{+FDT} \\
        \multicolumn{2}{c|}{Metrics}&MAE &RMSE &MAE &RMSE &MAE &RMSE&MAE &RMSE \\
       \midrule
        \multirow{4}*{\rotatebox{90}{Exchange}} & 96 & {0.038} &  {0.052} & {0.040} &  {0.055}& 0.039 &   {0.053} & \textbf{0.036} & \textbf{0.050} \\
         & 192 &  {0.053}&  {0.069} &  {0.052}&  {0.070}&0.055& 0.071 &  \textbf{0.050}& \textbf{0.068}\\
         & 336 &  {0.064}&  {0.085} &  {0.069}&  {0.094} &0.068& 0.090 & \textbf{0.060}& \textbf{0.082}\\
         & 720 &  {0.092}&  {0.116} &  {0.115}&  {0.145}&0.110& 0.132 & \textbf{0.090}& \textbf{0.114}\\
         \midrule
         \multirow{4}*{\rotatebox{90}{Weather}} &  96 & {0.040}&  {0.081}  &  {0.042}&  {0.085}&   {0.039}&   {0.082}& \textbf{0.037} & \textbf{0.080}\\
         & 192 & {0.048}&  {0.089}  &  {0.045}&  {0.089}&   {0.046}&   {0.090} &  \textbf{0.043}& \textbf{0.088}\\
          &  336  & {0.056}&  {0.098} &  {0.053}&  {0.097}&   {0.055}&   {0.099}& \textbf{0.050} & \textbf{0.095}\\
         & 720 & {0.065} &  {0.106} & {0.061} &  {0.108}&   {0.060}&   {0.105}& \textbf{0.056} & \textbf{0.103} \\
         \midrule
         \multirow{4}*{\rotatebox{90}{ILI}} & 24 & {0.167} &  {0.214} & {0.151} &  {0.199}&0.156&   {0.203} &  \textbf{0.141} & \textbf{0.196} \\
         & 36 &  {0.179}&  {0.231} &  {0.168}&  {0.228}&0.171& 0.230 & \textbf{0.158} & \textbf{0.212}\\
         & 48 &  {0.165}&  {0.216} &  {0.158}&  {0.214} &0.160& 0.214 & \textbf{0.151} & \textbf{0.198}\\
         & 60 &  {0.166}&  {0.212} &  {0.161}&  {0.204}&0.164& 0.210 &  \textbf{0.152} & \textbf{0.196} \\
         \midrule
         \multirow{4}*{\rotatebox{90}{Electricity}} &  96 & {0.045}& {0.075}  &  {0.046}&  {0.078}&   {0.044}&   {0.074}& \textbf{0.041}&\textbf{0.072} \\
         & 192 &  {0.043}&  {0.070}  &  {0.042}&  {0.070}&   {0.043}&   {0.071}& \textbf{0.040}& \textbf{0.068}\\
          &  336 & {0.044}& {0.071} & {0.043}& {0.070}&  {0.042}&  {0.070}&  \textbf{0.040}&\textbf{0.067} \\
         & 720 & {0.054} & {0.080}  & {0.048} &  {0.076}& 0.050& {0.077}& \textbf{0.048} & \textbf{0.076}\\
         \midrule
         \multirow{4}*{\rotatebox{90}{ETTh1}} &  96 & {0.063}& {0.089}  &  {0.061}&  {0.088}& 0.062& 0.089&  \textbf{0.060}& \textbf{0.084} \\
         & 192 & {0.067}&  {0.094}  &  {0.065}&  {0.092}& 0.066&   {0.092}& \textbf{0.062}& \textbf{0.090}\\
          &  336 & {0.070}& {0.097} &  {0.068}&  {0.095}&   {0.068} & {0.096} & \textbf{0.064}& \textbf{0.092}\\
         & 720 & {0.082} & {0.108}  & {0.089} &  {0.110}&   {0.091}&   {0.111}& \textbf{0.076} & \textbf{0.100}\\
         \midrule
         \multirow{4}*{\rotatebox{90}{ETTm1}} &  96 & {0.055}&  {0.080}  &  {0.054}&  {0.078}&   {0.052} & 0.077&  \textbf{0.051} & \textbf{0.072} \\
         & 192 &  {0.060}&  {0.087}  &  {0.058}&  {0.086}&   {0.057}&   {0.085}& \textbf{0.056} & \textbf{0.084} \\
          &  336 & {0.065}& {0.093} &  {0.062}&  {0.090}&   {0.064} &   {0.092} &  \textbf{0.060} & \textbf{0.088}\\
         & 720 & {0.072} &  {0.099}  & {0.070} &  {0.100}&   {0.071}&   {0.102}& \textbf{0.066}& \textbf{0.093}\\
         \bottomrule
    \end{tabular}
    }

    \vspace{-1mm}
\end{table}

\paragraph{Baselines} We conduct a comprehensive comparison of the forecasting performance between our model \textsc{DeRiTS} and several representative and state-of-the-art (SOTA) models on the six datasets, including Transformer-based models: Informer~\cite{zhou2021informer}, Autoformer~\cite{autoformer21}, FEDformer~\cite{fedformer22}, PatchTST~\cite{PatchTST2023}; MLP-based model: LSTF-Linear~\cite{DLinear_2023}; Frequency domain-based model: FreTS~\cite{frets_2023}. Besides, we also consider the existing normalization methods towards distribution shifts in time series forecasting, including RevIN~\cite{kim2021reversible}, NSTransformer~\cite{liu2022non} and Dish-TS~\cite{fan2023dish}.
All the baselines we reproduced are implemented based on their official code and we leave more baseline details in Appendix \ref{appendix:baseline}.

\paragraph{Implementation Details} 
We conduct our experiments on a single NVIDIA RTX 3090 24GB GPU with PyTorch 1.8~\cite{PYTORCH19}. We take MSE (Mean Squared Error) as the loss function and report the results of MAE (Mean Absolute Errors) and RMSE (Root Mean Squared Errors) as the evaluation metrics. A lower MAE/RMSE indicates better performance of time series forecasting. More detailed information about the implementation are included Appendix~\ref{appendix:implementation}.

\subsection{Overall Performance}
To verify the effectiveness of \textsc{DeRiTS}, we conduct the  performance comparison of multivariate time series forecasting in several benchmark datasets. Table~\ref{tab:long_term} presents the overall forecasting performance in the metrics of MAE and RMSE under different prediction lengths. 
In brief, the experimental results demonstrate that \textsc{DeRiTS} achieves the best performances in most cases as shown in Table \ref{tab:long_term}. Quantitatively, compared with the best results of transformer-based models, \textsc{DeRiTS} has an average decrease of more than 20\% in MAE and RMSE. Compared with more recent frequency learning model, FreTS~\cite{kunyi_2023} and the state-of-the-art transformer model, PathchTST~\cite{PatchTST2023}, \textsc{DeRiTS} can still outperform them in general. This has shown the great potential of \textsc{DeRiTS} in the time series forecasting task.

\subsection{Comparison with Normalization Techniques}
In this section, we further compare our performance with the recent normalization technique, RevIN \cite{kim2022reversible} and Dish-TS~\cite{fan2023dish} that handle distribution shifts in time series forecasting. Table \ref{table:results_with_fdt} has shown the performance comparison in time series forecasting taking the LTSF-Linear~\cite{zeng2022transformers}. Since FDT transforms signals to the frequency domain, we implement a simple single-layer Linear model in the frequency domain. From the results, we can observe that the existing RevIN and Dish-TS can only improve the backbone in some shifted datasets. In some situations, it might lead to worse performances. Nevertheless, our FDT can usually achieve the best performance. A potential explanation is that FDT transforms data with full frequency spectrum and thus achieves stable improvement while other normalization techniques cannot reveal full data distribution and thus cannot make use of them for transformation.


\begin{table}[!t]
    \centering
    \caption{The impact of frequency derivative transformation with order $k$. For Exchange and Weather datasets, the prediction length and the lookback window size are 96. For ILI dataset, the prediction length and the lookback window size are 36 due to length limitation.} \label{table:fdt_analysis}
    \vspace{-2mm}
    \scalebox{0.9}{
    \begin{tabular}{c|c c | c c | c c}
    \toprule
    Datasets & \multicolumn{2}{c|}{Exchange} & \multicolumn{2}{c|}{ILI} & \multicolumn{2}{c}{Weather}\\
     Metrics    &  MAE & RMSE & MAE & RMSE & MAE & RMSE\\
     \midrule
     $k=0 $   & {0.041} & 0.058 & {0.179} & 0.231 & {0.040} & 0.099 \\ 
    \midrule
      $k=1 $  & 0.037 & 0.053 & 0.159 & 0.213 & 0.038 & 0.081\\
    \midrule
       $k=2$   & 0.035 & 0.050 & 0.157& 0.212 & 0.037 & 0.080\\
    \midrule
        $k=3$   & 0.036 & 0.052 & 0.162 & 0.216 & 0.037 & 0.081\\
    \bottomrule
    \end{tabular}
    }
    \label{tab:order_k}
    \vspace{-1mm}
\end{table}

\subsection{Model Analysis}

\paragraph{Impact of Frequency Derivative Transformation} It is notable that
our proposed FDT plays an important role in \textsc{DeRiTS}, and we aim to analysis the impact of FDT on the model performance. Thus, we consider a special case of FDT, which is when we set $k=0$, the derivation is removed and FDT is degraded to naive Fourier transform. In addition to this setting, we also vary different orders ($k$) of derivation to test the effectiveness. Table \ref{table:fdt_analysis} has shown the results on three datasets. We can easily observe that the performance of \textit{DeRiTS} can beat the variant version without the derivation, which has demonstrate the effectiveness of FDT. Moreover, with the increase of order, the original time series would be derived too much. This might cause the information loss which leads to performance degradation accordingly.

%

\begin{figure}[!h]
    \centering
    \subfigure[Exchange dataset]
    {
        \centering
        \includegraphics[width=0.32\linewidth]{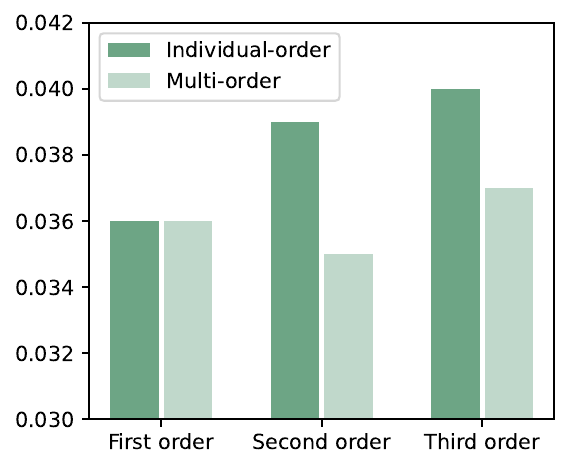}
         \label{Exchange}
    }\hspace{-3mm}
    \subfigure[Weather dataset]
    {
        \centering
        \includegraphics[width=0.32\linewidth]{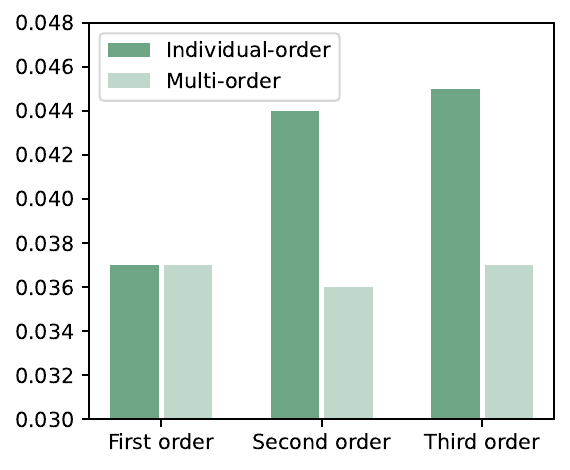}
        \label{Weather}
    }\hspace{-3mm}
    \subfigure[ILI dataset]
    {
        \centering
        \includegraphics[width=0.32\linewidth]{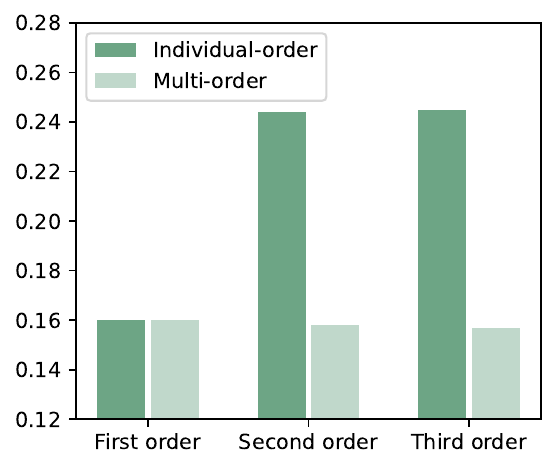} 
        \label{Weather}
    }
    \vspace{-3mm}
    \caption{The forecasting performance (MAE) comparison  between original \textsc{DeRiTS} (multi-order) and its individual-order variant. The lower values indicate the better forecasting performance.} \label{fig:arc_analysis}
    \vspace{-2mm}
\end{figure}

\paragraph{Impact of Multi-order Stacked Architecture} As aforementioned in Section \ref{sec:fre_arch}, we organize \textsc{DeRiTS} as a parallel-stacked architecture for multi-order fusion. Thus we aim to study the impact of such a stacked architecture. In contrast to multi-order stacked \textsc{DeRiTS}, we have considered another situation of \textit{individual-order} \textsc{DeRiTS} that removes the parallel-stacked architecture. Figure \ref{fig:arc_analysis} has shown the performance comparison with \textit{individual-order} \textsc{DeriTS} in the Exchange, ILI, and Weather datasets. It can be easily observed that without the parallel-stacked architecture for multi-order fusion, the \textit{individual-order} \textsc{DeriTS} achieves much worse performance than the original one even under different orders, which signifies the necessity of our multi-order design in the frequency derivative learning architecture.


\paragraph{Lookback Analysis} We aim to examine the impact of the lookback window on forecasting performance of \textsc{DeRiTS}. Figure \ref{fig:large_lookback} has demonstrated the experimental results on the Exchange, Weather, and ILI datasets. Specifically, we maintain the prediction length as 96 and and vary the lookback length from 48 to 240 on Exchange and Weather datasets. For the ILI dataset, we keep the prediction length at 36 and alter the lookback window size from 24 to 72. From the results, we can observe that in most cases, larger lookback length would bring up less prediction errors; this is because larger input includes more temporal information. Also, larger input length would also bring more noises hindering forecasting, while our \textsc{DeRiTS} can achieve comparatively stable performance.


\begin{figure}
    \centering
     \subfigure[Exchange dataset]
    {
        \centering
        \includegraphics[width=0.3\linewidth]{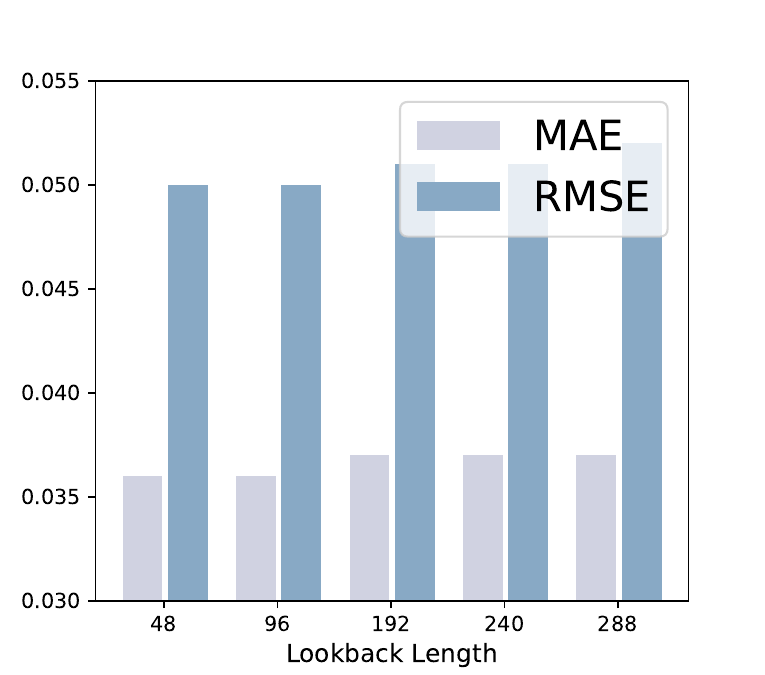}
         \label{Exchange}
    }
    \subfigure[Weather dataset]
    {
        \centering
        \includegraphics[width=0.3\linewidth]{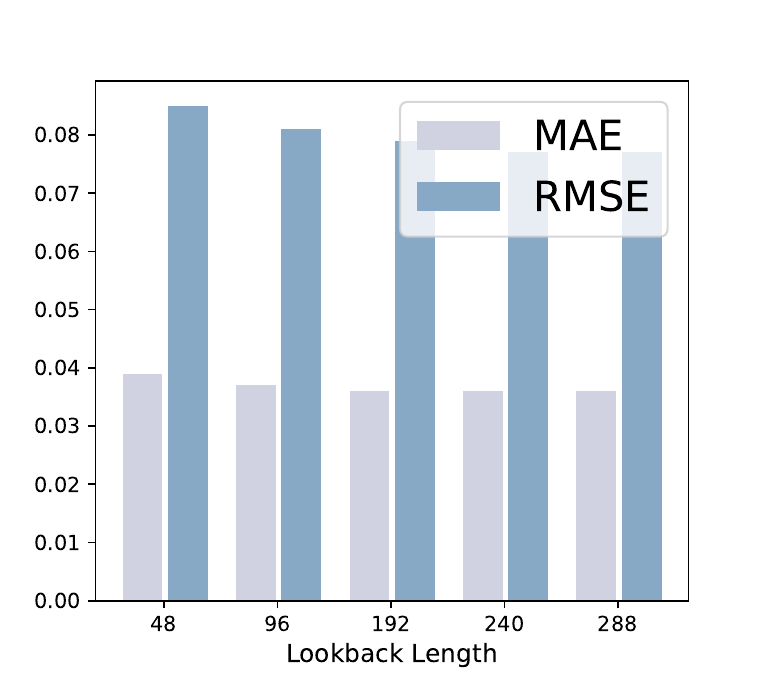}
        \label{Weather}
    }
    \subfigure[ILI dataset]
    {
        \centering
        \includegraphics[width=0.3\linewidth]{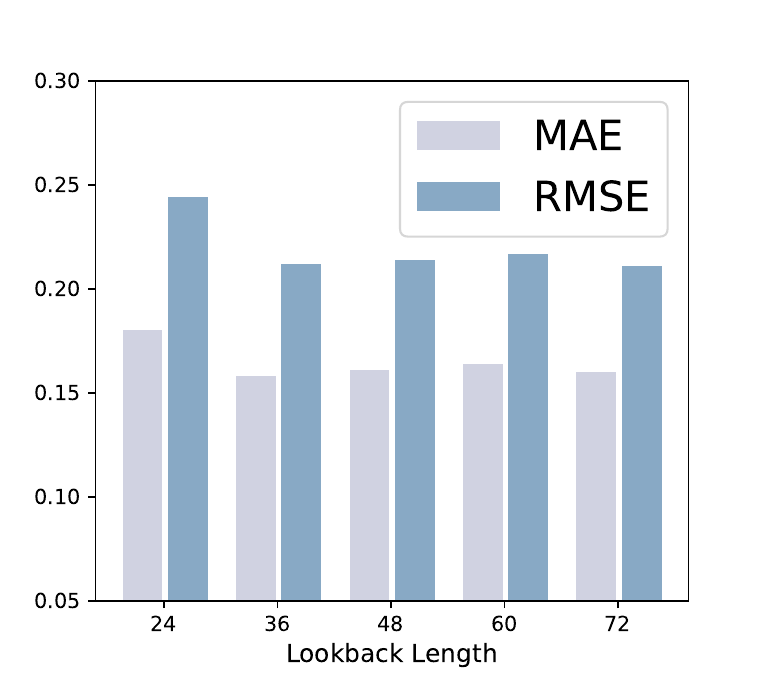}
        \label{Weather}
    }
    \vspace{-3mm}
    \caption{Impact of lookback length on forecasting. Metrics MAE and RMSE are reported with the length of lookback window prolonged and the prediction length fixed. 
    }
    \label{fig:large_lookback}
    \vspace{-2mm}
\end{figure}

\begin{table}[h]
\small
\centering
\caption{Efficiency analysis. We report the training time of \textsc{DeRiTS} and Non-Stationary  
 (NS) transformer-based methods.} 
\vspace{-2mm}
\label{table:trainingtime}
\resizebox{0.8\linewidth}{!}{
\begin{tabular}{ccccc}
\toprule
Length     & 96     & 192    & 336   & 480    \\
\midrule
NS-FEDformer  & 137.7 & 160.4 & 192.8 & 227.2 \\
NS-Autoformer & 44.41  & 59.23   & 78.29 & 101.5 \\
NS-Transformer   & 30.24  & 41.38  & 50.21 & 61.88  \\
\midrule
\textsc{DeRiTS}     & 12.57  & 13.87  & 14.93 & 15.93  \\
\bottomrule
\end{tabular}
}
\vspace{-2mm}
\end{table}

\paragraph{Efficiency Analysis} To conduct the efficiency analysis for our framework, we report the training time of \textsc{DeRiTS} across various prediction lengths, and we also include the training time of the Non-Stationary transformer~\cite{liu2022non} for comparison, coupled with its corresponding backbones such as FEDformer, Autoformer and Transformer.
The experiments are conducted under the prediction length  
with the same input length of 96 on the Exchange dataset.
As shown in Table~\ref{table:trainingtime}, the results prominently highlight that our model exhibits superior efficiency metrics. 
Our \textsc{DeRiTS} significantly reduces the number of parameters thus enhancing the computation speed. In particular, our model showcases an average speed that is several times faster than the baselines. These findings underscore the efficiency gains achieved by our model, positioning it as a compelling choice for non-stationary time series forecasting.

\subsection{Visualization Analysis}
\paragraph{FDT and Non-stationarity} To study the Fourier derivative transformation in \textsc{DeRiTS}, we visualize the original signals and derived signals for comparison. Since the direct outputs of FDT are complex values that are difficult to visualize completely, we thus transform the derived frequency components back to the time domain via inverse FDT, which allows us to show the corresponding time values for visualization. 
Specifically, we choose two non-stationary time series from the Weather dataset and Exchange dataset, respectively. As shown in Figure \ref{fig:multi-order-fdt}, we can observe the original signals have included obvious non-stationary oscillations and trends. In contrast, the derived signals exhibit a larger degree of stationarity compared with raw data. This further reveals that learning in the derivative representation of signals is more stationary and thus can achieve better performance.

\begin{figure}[!t]
    \centering
    \subfigure[Weather dataset]
    {
        \centering
        \includegraphics[width=0.46\linewidth]{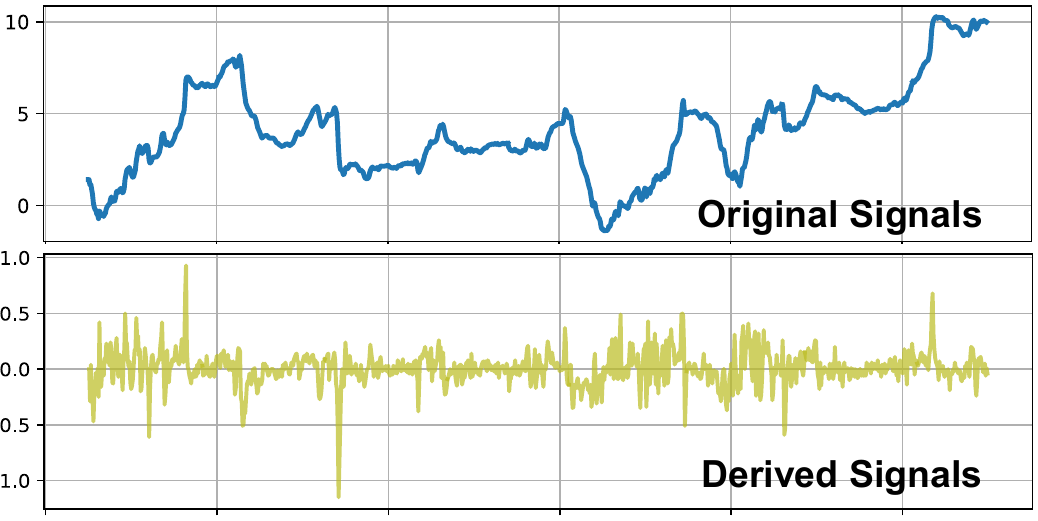}
    }
    \hspace{-2mm}
    \subfigure[{Exchange dataset}]
    {
        \centering
        \includegraphics[width=0.48\linewidth]{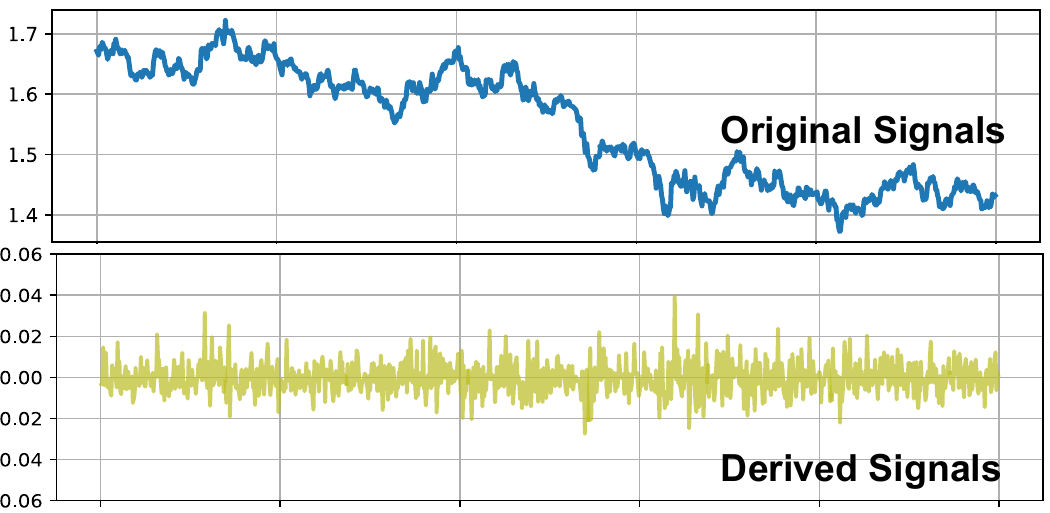}
    }
    \vspace{-3mm}
    \caption{Visualization comparison of original signals and derived signals with Fourier derivative transformation.}
    \vspace{-3mm}
    \label{fig:multi-order-fdt}
\end{figure}

    

\vspace{-2mm}
\begin{figure}[!h]
    \centering
    \subfigure[\textsc{DeRiTS}]
    {
        \centering
        \includegraphics[width=0.45\linewidth]{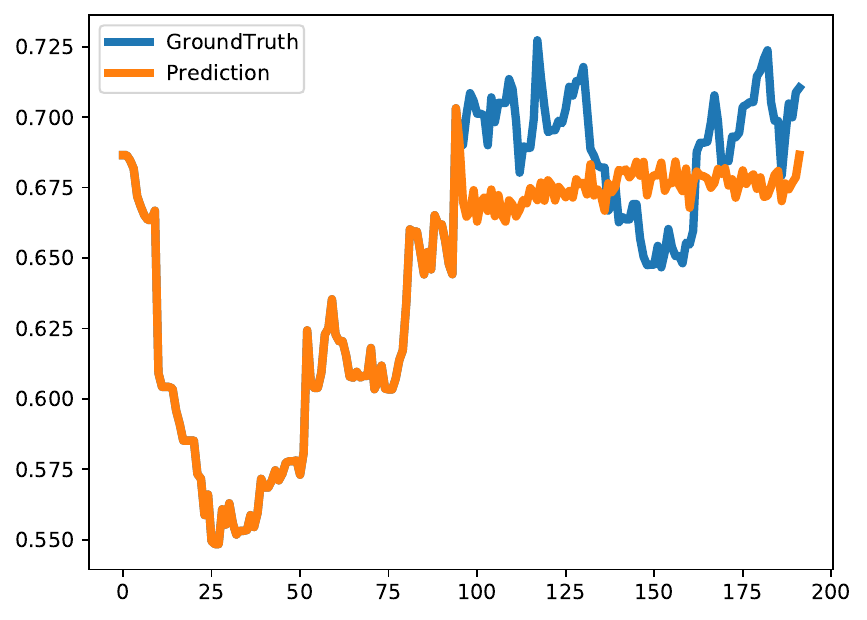}
        \label{N=24}
    }
    \subfigure[NSTransformer]
    {
        \centering
        \includegraphics[width=0.45\linewidth]{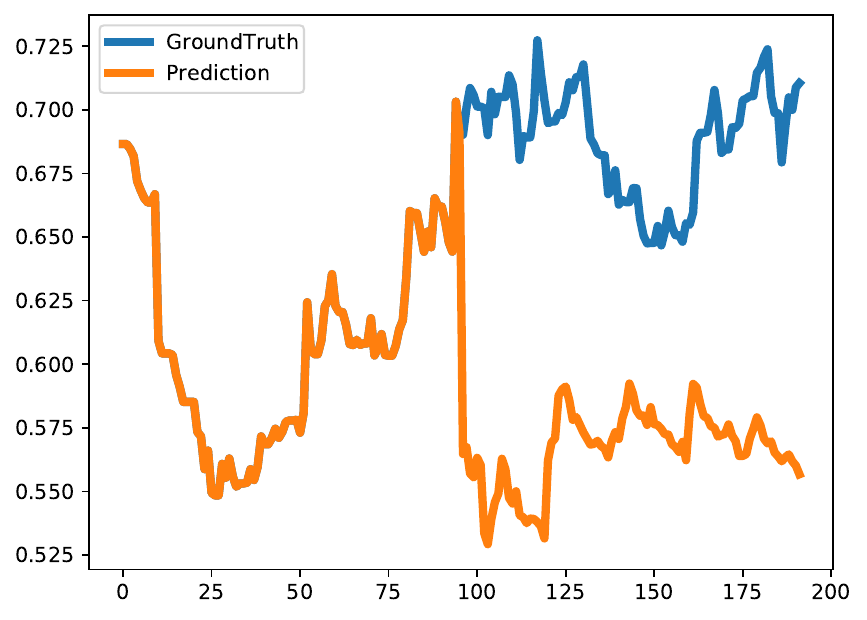}
        \label{N=29}
    }
    \vspace{-3mm}
    \caption{Visualizations of non-stationary forecasting (prediction vs. ground truth) on the Exchange dataset.}
    \label{fig:visualization}
    \vspace{-3mm}
\end{figure}

\paragraph{Case Study of Forecasting} To further analysis the model performance, we carry out the case study for non-stationary time series forecasting. Figure \ref{fig:visualization} demonstrates the visualization of forecasting results of \textsc{DeRiTS} and NSTransformer~\cite{liu2022non} with the prediction length as 96 and lookback length as 96. 
Upon careful observation of it, it becomes evident that \textsc{DeRiTS} can be capable of aligning with the ground truth when the time series distribution is largely shifted, while the baseline method deviates from the true values.
The visualizations demonstrates the model's adaptability to shifts. We include more visualizations in Appendix \ref{appendix:visualization}.

\section{Conclusion Remarks}
In this paper, we propose to address non-stationary time series forecasting from the frequency perspective. Specifically, we utilize the whole frequency spectrum for the transformation of time series in order to make full use of time series distribution. Motivated by this point, we propose a {deep frequency derivative learning} framework \textsc{DeRiTS} for non-stationary forecasting, which is mainly composed of the Frequency Derivative Transformation and the Order-adaptive Fourier Convolution Network with a parallel-stacked architecture. Extensive experiments on real-world datasets have demonstrated its superiority.
Moreover, distribution shifts and non-stationarity are actually a pervasive and crucial topic for time series forecasting. Thus we hope that the new perspective of frequency derivation together with the \textsc{DeRiTS} framework can facilitate more future related research.

\section*{Acknowledgments}
This work was partially supported by the National Natural Science Foundation of China (No. 62072153) the Anhui Provincial Key Technologies R\&D Program (No. 2022h11020015), the Shanghai Baiyulan Talent Plan Pujiang Project (23PJ1413800), and the National Natural Science Foundation of China (No. 623B2043).

\appendix
\section{Experiment Details}\label{experimental_details}
\subsection{Dataset Details} \label{appendix:dataset}
We follow previous works ~\cite{autoformer21,PatchTST2023,DLinear_2023} and adopt seven real-world datasets in the experiments to evaluate the accuracy of time series forecasting, including Exchange~\footnote{\url{https://github.com/laiguokun/multivariate-time-series-data}}
, ILI~\footnote{\url{https://gis.cdc.gov/grasp/fluview/fluportaldashboard.html}}, Weather~\footnote{\url{https://www.bgc-jena.mpg.de/wetter/}}
,  
Traffic ~\footnote{\url{http://pems.dot.ca.gov}},  
Electricity
~\footnote{\url{https://archive.ics.uci.edu/ml/datasets/ElectricityLoadDiagrams20112014}}
,  
and ETTh1\&ETTm1~\footnote{\url{https://github.com/zhouhaoyi/ETDataset}}
.  
We summarize the datasets in Table \ref{tab:datasets}.
\vspace{-3mm}

\begin{table}[!h]
    \centering
    \caption{Summary of datasets.}
    \vspace{-3mm}
    \begin{tabular}{c|c c c }
    \toprule
      Datasets   & Variables  & Samples & Granularity\\
      \midrule
      \midrule
      Exchange  &  8 & 7,588 & 1day\\
      ILI &  7 & 966 & 1week \\
      Weather & 21 & 52,696 & 10min\\
      Traffic & 862 & 17,544 & 1hour\\
      Electricity & 321 & 26,304 & 1hout\\
      ETTh1 & 7 & 17,420 & 1hour\\
      ETTm1 & 7 & 69,680 & 5min\\
      \midrule
    \bottomrule
    \end{tabular}
    \label{tab:datasets}
    \vspace{-5mm}
\end{table}

\subsection{Baselines } \label{appendix:baseline}
We compare our model \textsc{DeRiTS} with other seven time series forecasting methods, including
FreTS~\cite{frets_2023},
PatchTST~\cite{PatchTST2023},
FEDformer~\cite{fedformer22},
Autoformer~\cite{autoformer21},
Informer~\cite{zhou2021informer},
DLinear~\cite{DLinear_2023},
and NSTransformer~\cite{liu2022non}.
Also, we compare our FDT with normalization methods, including RevIN~\cite{kim2021reversible} and SAN~\cite{liu2023adaptive}.
We obtained the baseline codes from their respective official GitHub repositories. As the datasets serve as general benchmarks, we can reproduce their codes according to their recommended settings.

\subsection{Implementation} \label{appendix:implementation}
We adhere to the experimental settings outlined in FreTS~\cite{frets_2023}. For certain datasets, we meticulously fine-tune hyperparameters such as batch size and learning rate on the validation set, selecting configurations that yield optimal performance. Batch size tuning is conducted over the set \{4, 8, 16, 32\}. The default setting for the order $k$ is 2. The codes will be publicly available soon.

\section{Proof} \label{proof}
\subsection{The Equivalence of the Mean Value from a Frequency Perspective} \label{proof_mean}

For convenience, we employ the discrete representation of the signal to demonstrate the equivalence of the mean value. Given a signal $x[n]$ with a length of $N$, we can obtain its corresponding discrete Fourier transform $\mathcal{X}[f]$ by:
\begin{equation}
    \mathcal{X}[f] = \frac{1}{N}\sum_{n=0}^{N-1} x[n]e^{2\pi jnf/N}
\end{equation}
where $j$ is the imaginary unit.
We set $f$ as 0 and then,
\begin{equation}
\begin{split}
    \mathcal{X}[\textcolor{red}{0}] &= \frac{1}{N}\sum_{n=0}^{N-1} x[n]e^{2\pi jn\textcolor{red}{0}/N} \\
    &=  \frac{1}{N}\sum_{n=0}^{N-1}x[n]. 
\end{split}
\end{equation}
According the above equation, we can find that the mean value $\frac{1}{N}\sum_{n=0}^{N-1}x[n]$ in the time domain is equal to the zero frequency component $\mathcal{X}[0]$ in the frequency domain.

\begin{table*}[ht]
    \centering
    \caption{Long-term forecasting results comparison with different lookback window lengths $L \in \{36, 72, 108\}$ on the ILI dataset. The prediction lengths are as $H \in \{24, 36, 48, 60\}$.
    The best results are in \textbf{bold} and the second best results are \underline{underlined}.}
    \scalebox{0.78}{
    \begin{tabular}{l c|c c|c c|c c|c c| c c|c c |c c |c c}
    \toprule
       \multicolumn{2}{c|}{Models} &\multicolumn{2}{c|}{\textsc{DeRiTS}} &\multicolumn{2}{c|}{FreTS} & \multicolumn{2}{c|}{PatchTST} & \multicolumn{2}{c|}{LTSF-Linear} & \multicolumn{2}{c|}{FEDformer} & \multicolumn{2}{c|}{Autoformer} & \multicolumn{2}{c|}{Informer} & \multicolumn{2}{c}{NSTransformer}\\
        \multicolumn{2}{c|}{Metrics}&MAE &RMSE &MAE &RMSE &MAE &RMSE&MAE &RMSE&MAE &RMSE &MAE &RMSE &MAE &RMSE &MAE &RMSE\\
       \midrule
         \multirow{4}*{\rotatebox{90}{36}} & 24 & \textbf{0.141}& 0.197 &\underline{0.143} & \textbf{0.192}&\underline{0.143}& \underline{0.196} &{0.167} & {0.214}& 0.195& 0.246& 0.208& 0.260& 0.192& 0.259& 0.166&0.228 \\
         & 36 & \textbf{0.158} & \textbf{0.212}& \underline{0.166}& \underline{0.222}&0.182& 0.239 & {0.179}& {0.231}& 0.182& 0.246& 0.190& 0.255& 0.233& 0.303& 0.187& 0.254\\
         & 48 & \textbf{0.150} & \textbf{0.198} & {0.166}& {0.226}&\underline{0.159}& \underline{0.213} & {0.165}& {0.216}& 0.173& 0.231& 0.178& 0.238& 0.214& 0.279& 0.172& 0.235\\
         & 60 & \textbf{0.152} & \textbf{0.196} & {0.166}& {0.221}&\underline{0.161}& \underline{0.209} & {0.166}& {0.212}& 0.167& 0.218& 0.171& 0.224& 0.208& 0.272& 0.164& 0.220\\
         \midrule
         \multirow{4}*{\rotatebox{90}{72}} &  24 & \textbf{0.138} & \textbf{0.187}& {0.142}& {0.193}& \underline{0.139} & \underline{0.188} &0.152 & {0.197}& 0.178& 0.226& 0.196& 0.246& 0.193&0.259& 0.154 & 0.200 \\
         & 36 &\textbf{0.160} & \textbf{0.211}& {0.173}& {0.228}& {0.173}& {0.229} &0.174 & {0.224}& 0.196& 0.259& 0.196 & 0.258&0.233& 0.301&\underline{0.168}& \underline{0.222}\\
          &  48 & \underline{0.152}& \textbf{0.198}& \textbf{0.150}& \underline{0.202}& {0.163} & {0.214} & {0.160}& {0.207} & 0.184& 0.243&0.184& 0.240& 0.214& 0.282& 0.163& 0.210\\
         & 60 & \textbf{0.154} & \textbf{0.200} & \underline{0.161} & \underline{0.209}& {0.165} & {0.213} & {0.161}& {0.206}& 0.177& 0.232& 0.175& 0.229& 0.200&0.264 & 0.164& 0.212\\
         \midrule
         \multirow{4}*{\rotatebox{90}{108}} &  24 & \underline{0.122}& \underline{0.163}& \textbf{0.120}& \textbf{0.154}& {0.136}& {0.180}& 0.141 & 0.179 & 0.178&0.222 & 0.185& 0.231& 0.196 & 0.260& 0.154& 0.194\\
         & 36 & \textbf{0.136}& \underline{0.179}& \underline{0.137}& \textbf{0.169}& {0.158}& {0.206}& {0.156}& 0.197& 0.197& 0.253& 0.198& 0.250& 0.244& 0.314& 0.150& 0.189\\
          &  48 & \textbf{0.134}& \textbf{0.173}& {0.147}& \underline{0.176}& {0.164}& {0.211}& \underline{0.144}& {0.184}& 0.188&0.241 & 0.188& 0.241& 0.217& 0.287& 0.164& 0.206\\
         & 60 &\textbf{0.142} & \textbf{0.182}&\underline{0.153} & \underline{0.188}& 0.175& {0.221}& {0.154}& {0.194}&0.184 & {0.231}&0.182& 0.227& 0.212& 0.282&0.175 &0.212 \\
         \bottomrule
    \end{tabular}
    }
    \label{tab:long_term_appendix_ili}
\end{table*}

\begin{table*}[!h]
    \centering
    \caption{Long-term forecasting results comparison with different lookback window lengths $L \in \{96, 192, 336\}$ on the Exchange dataset. The prediction lengths are as $H \in \{96, 192, 336, 720\}$.
    The best results are in \textbf{bold} and the second best results are \underline{underlined}. '-' denotes out of memory. }
    \scalebox{0.78}{
    \begin{tabular}{l c|c c|c c|c c|c c| c c|c c |c c |c c}
    \toprule
       \multicolumn{2}{c|}{Models} &\multicolumn{2}{c|}{\textsc{DeRiTS}} &\multicolumn{2}{c|}{FreTS} & \multicolumn{2}{c|}{PatchTST} & \multicolumn{2}{c|}{LTSF-Linear} & \multicolumn{2}{c|}{FEDformer} & \multicolumn{2}{c|}{Autoformer} & \multicolumn{2}{c|}{Informer} & \multicolumn{2}{c}{NSTransformer}\\
        \multicolumn{2}{c|}{Metrics}&MAE &RMSE &MAE &RMSE &MAE &RMSE&MAE &RMSE&MAE &RMSE &MAE &RMSE &MAE &RMSE &MAE &RMSE\\
       \midrule
         \multirow{4}*{\rotatebox{90}{96}} & 96 & \textbf{0.035}& \textbf{0.050}& \underline{0.037} &  \underline{0.051}&0.039&  {0.052} & {0.038} &  {0.052}& 0.050& 0.067& 0.050& 0.066& 0.066& 0.084& 0.052&0.068 \\
         & 192 & \textbf{0.050} & \textbf{0.066} &  \textbf{0.050}&  \underline{0.067}&0.055& 0.074 &  \underline{0.053}&  {0.069}& 0.064& 0.082& 0.063& 0.083& 0.068& 0.088& 0.062& 0.082\\
         & 336 & \textbf{0.060} & {0.083} &  \underline{0.062}&  \underline{0.082}&0.071& 0.093 &  {0.064}&  \textbf{0.080}& 0.080& 0.105& 0.075& 0.101& 0.093& 0.127& 0.077& 0.098\\
         & 720 & \textbf{0.086} & \textbf{0.108} &  \underline{0.088}&  \underline{0.110}&0.132& 0.166 &  {0.092}&  {0.116}& 0.151& 0.183& 0.150& 0.181& 0.117& 0.170& 0.140& 0.172\\
         \midrule
         \multirow{4}*{\rotatebox{90}{192}} &  96 & \textbf{0.036} & \textbf{0.050} &  \textbf{0.036}&  \textbf{0.050}&  \underline{0.037} &  \underline{0.051} & 0.038 &  \underline{0.051}& 0.067& 0.086& 0.066& 0.085& 0.109& 0.131& 0.047 & 0.063 \\
         & 192 & \textbf{0.051} & \underline{0.070} &  \textbf{0.051}&  \textbf{0.068}&  {0.052}&  \underline{0.070} &0.053 &  \underline{0.070} & 0.080& 0.101& 0.080& 0.102& 0.144& 0.172&0.065& 0.088\\
          &  336 & \underline{0.070} & \underline{0.095} &  \textbf{0.066}&  \textbf{0.087}&  {0.072} & {0.097} & {0.073}&  {0.096} & 0.093& 0.122& 0.099& 0.129& 0.141& 0.177& 0.077& 0.103\\
         & 720 & \textbf{0.086}& \textbf{0.108}& \underline{0.088} &  \underline{0.110}& {0.099} & {0.128} &  {0.098}&  {0.122}& 0.190& 0.222& 0.191& 0.224& 0.173&0.210 & 0.142& 0.182\\
         \midrule
         \multirow{4}*{\rotatebox{90}{336}} &  96 & \textbf{0.037} & \textbf{0.051} &  \underline{0.038}&  \underline{0.052}&  {0.039}&  {0.053}& 0.040& 0.055& 0.088&0.113 & 0.088& 0.110& 0.137 & 0.169& -& -\\
         & 192 & \textbf{0.052}& \underline{0.071}&  \underline{0.053}&  \textbf{0.070}&  {0.055}&  {0.071}&  {0.055}& 0.072& 0.103& 0.133& 0.104& 0.133& 0.161& 0.195& -& -\\
          &  336 & \textbf{0.070}& \underline{0.094}&  \underline{0.071}&  \textbf{0.092}&  {0.074}&  {0.099}& {0.077}& {0.100}& 0.123&0.155 & 0.127& 0.159& 0.156& 0.193& -& -\\
         & 720 & \textbf{0.080} & \underline{0.109} & \underline{0.082} &  \textbf{0.108}& 0.100& {0.129}&  {0.087}&  {0.110}&0.210 & {0.242}& 0.211& 0.244& 0.173& 0.210&- &- \\
         \bottomrule
    \end{tabular}
    }
    \label{tab:long_term_appendix_exchange}
\end{table*}

\subsection{Proof of Proposition 1} \label{proof_fdt}
\begin{appendixproposition}
Given $X(t)$ in the time domain and $\mathcal{X}(f)$ in the frequency domain correspondingly, the $k$-order Fourier Derivative Operator on $\mathcal{X}(f)$ is equivalent to $k$-order derivation on $X(t)$ with respect to $t$ in the time domain, written by:
\begin{equation}
    (j2\pi f)^k\mathcal{X}(f) = \mathcal{F}(\frac{\mathrm{d}^k X(t)}{\mathrm{d} t^k} ),
\end{equation}
where $\mathcal{F}$ is Fourier transform, $\frac{\mathrm{d}^k}{\mathrm{d} t^k}$ is $k$-order derivative with respect to $t$, and $j$ is the imaginary unit.
\end{appendixproposition}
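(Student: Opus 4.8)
The plan is to prove the claim by induction on $k$, using the base case $k=1$ (the classical differentiation property of the Fourier transform) as the key building block. First I would establish the $k=1$ case directly from the definition in Equation~(\ref{fft}): writing $\mathcal{F}(X(t)) = \int_{-\infty}^{\infty} X(t) e^{-j2\pi ft}\,\mathrm{d}t$, I would compute $\mathcal{F}(X'(t)) = \int_{-\infty}^{\infty} X'(t) e^{-j2\pi ft}\,\mathrm{d}t$ and apply integration by parts, differentiating $e^{-j2\pi ft}$ and integrating $X'(t)$. The boundary term $\left[ X(t) e^{-j2\pi ft} \right]_{-\infty}^{\infty}$ vanishes under the standing assumption that $X(t)\to 0$ as $|t|\to\infty$ (equivalently, $X\in L^1$ with suitable decay), leaving $\mathcal{F}(X'(t)) = (j2\pi f)\int_{-\infty}^{\infty} X(t) e^{-j2\pi ft}\,\mathrm{d}t = (j2\pi f)\mathcal{X}(f)$.

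Next I would run the induction step. Assuming $\mathcal{F}\!\left(\frac{\mathrm{d}^{k-1}X(t)}{\mathrm{d}t^{k-1}}\right) = (j2\pi f)^{k-1}\mathcal{X}(f)$, I would apply the $k=1$ identity to the function $\frac{\mathrm{d}^{k-1}X(t)}{\mathrm{d}t^{k-1}}$ in place of $X(t)$, which gives $\mathcal{F}\!\left(\frac{\mathrm{d}^{k}X(t)}{\mathrm{d}t^{k}}\right) = (j2\pi f)\,\mathcal{F}\!\left(\frac{\mathrm{d}^{k-1}X(t)}{\mathrm{d}t^{k-1}}\right) = (j2\pi f)(j2\pi f)^{k-1}\mathcal{X}(f) = (j2\pi f)^{k}\mathcal{X}(f)$, closing the induction. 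Recalling from the Definition of the $k$-order Fourier Derivative Operator that $\mathcal{R}_k(\mathcal{X}(f)) = (j2\pi f)^k \mathcal{X}(f)$, this is exactly the asserted equivalence.

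The main obstacle is not the algebra but the regularity bookkeeping: the integration-by-parts step requires that the relevant derivatives of $X$ exist, be integrable, and decay at infinity so that all boundary terms vanish. I would state these as explicit hypotheses (e.g., $X$ is $k$-times differentiable with $X, X', \dots, X^{(k-1)}$ absolutely integrable and tending to $0$ at $\pm\infty$), and note that for the discrete/finite-length signals actually used in \textsc{DeRiTS} the corresponding statement holds exactly via the DFT shift identity, so no decay assumption is needed in practice. A brief remark connecting the continuous derivation to the discrete implementation would round out the proof.
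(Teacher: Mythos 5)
Your proof is correct, but it takes a different route from the paper's. The paper starts from the inversion formula $X(t) = \mathcal{F}^{-1}(\mathcal{X}(f))$ and differentiates under the integral sign with respect to $t$, so each application of $\frac{\mathrm{d}}{\mathrm{d}t}$ acts on $e^{j2\pi f t}$ and pulls down a factor of $(j2\pi f)$; iterating (``by analogy'') gives $\frac{\mathrm{d}^k X(t)}{\mathrm{d}t^k} = \mathcal{F}^{-1}\bigl((j2\pi f)^k \mathcal{X}(f)\bigr)$, which is the stated identity after applying $\mathcal{F}$ to both sides. You instead work on the forward transform, integrating $\int X'(t)e^{-j2\pi f t}\,\mathrm{d}t$ by parts and discarding the boundary term, then closing a formal induction. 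The two arguments are duals of one another and trade one regularity burden for another: the paper's version silently assumes that differentiation can be exchanged with the (improper) integral, which requires integrability of $f^k\mathcal{X}(f)$ up to order $k$, whereas yours requires that $X$ and its first $k-1$ derivatives be integrable and vanish at $\pm\infty$. Your write-up is the more careful of the two in that it names these hypotheses explicitly and notes that the discrete/finite-length setting actually used by the model sidesteps the decay issue entirely; the paper's version is shorter and avoids boundary terms but leaves the interchange of limits unjustified. Either argument establishes the proposition; neither buys substantially more generality than the other.
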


\begin{proof}
\renewcommand{\qedsymbol}{}
We can get $X(t)$ by the inverse Fourier transform,
    \begin{equation}
        X(t) = \frac{1}{2\pi} \int_{-\infty}^{\infty} \mathcal{X}(f) e^{j2\pi f t} df.
    \end{equation}
Then, we conduct derivation of both sides of the above equation with respect to t, 
    \begin{equation}
    \begin{aligned}
        \frac{\mathrm{d} X(t)}{\mathrm{d} t} =& \frac{1}{2\pi} \int_{-\infty}^{\infty} \frac{\mathrm{d} (\mathcal{X}(f) e^{j2\pi f t})}{\mathrm{d} t} df \\
        =& \frac{1}{2\pi} \int_{-\infty}^{\infty} ((j2\pi f) \mathcal{X}(f)) e^{j2\pi f t} df \\
        =& \mathcal{F}^{-1}((j2\pi f) \mathcal{X}(f)).
    \end{aligned}
    \end{equation}
Again, we continue conducting derivation of both sides of the above equation with respect to t, 
    \begin{equation}
       \begin{aligned}
        \frac{\mathrm{d}^2 X(t)}{\mathrm{d} t^2} =& \frac{1}{2\pi} \int_{-\infty}^{\infty} \frac{\mathrm{d} ((j2\pi f) \mathcal{X}(f) e^{j2\pi f t})}{\mathrm{d} t} df \\
        =& \frac{1}{2\pi} \int_{-\infty}^{\infty} ((j2\pi f)^2 \mathcal{X}(f)) e^{j2\pi f t} df \\
        =& \mathcal{F}^{-1}((j2\pi f)^2 \mathcal{X}(f)).
    \end{aligned} 
    \end{equation}
    By analogy, we can get
    \begin{equation}
         \frac{\mathrm{d}^k X(t)}{\mathrm{d} t^k} = \mathcal{F}^{-1}((j2\pi f)^k \mathcal{X}(f)).
    \end{equation}
    Proved.
\end{proof}
\section{Additional Results} \label{appendix:results}
To further assess our model's performance under various lookback window lengths, we conduct additional experiments on both the ILI dataset and the Exchange dataset. Specifically, for the ILI dataset, we select lookback window lengths $L$ from the set $\{36,72,108\}$ due to the limited sample lengths (refer to Table \ref{tab:datasets}). For the Exchange dataset, we opt for lookback window lengths $L$ from the set $\{96,192,336\}$. The corresponding results are illustrated in Table \ref{tab:long_term_appendix_ili} and Table \ref{tab:long_term_appendix_exchange}, respectively. From these tables, it is evident that our model, \textsc{DeRiTS}, consistently achieves strong performance across different lookback window lengths.

\section{Model Analysis} \label{appendix:analysis}
As stated in Section \ref{sec:method_fdt}, our Fourier Derivative Transformation (FDT) with its inverse can enhance the models' ability of handling non-stationarity. Specifically, supposing univariate time series signals $x_t = f(t)$ includes the periodic part $f_p(t) = cos(at+b)$ and the trend part $f^k_r(t) = c_1 t + c_2 t^2 + \cdots + c_k t^k$. Given $t$ and $t'_1$, the raw distribution shift can be seen as the difference of the two segments by $DS(t,t') = |\sum_{t=t-L}^{t} f(t) - \sum_{t=t'-L}^{t'_1} f(t)|$. For simplicity, we consider the situation when $L=1$, the distribution shifts are 
$DS(t,t') = |f(t) - f(t')|$. 
We now analyze the time domain derivation of towards the distribution shifts. 
With the derivation, $DS_d(t,t')= |\frac{\mathrm{d}f}{\mathrm{d}t}(t) - \frac{\mathrm{d}f}{\mathrm{d}t}(t') |$. 
Since periodic signals are stationary signals, we focus on the trend signals.
Supposing time series consists only trends, we have  $DS_d(t,t')= |\frac{\mathrm{d}f_r^k}{\mathrm{d}t}(t) - \frac{\mathrm{d}f_r^k}{\mathrm{d}t}(t') | = |2c_2(t - t') + \cdots k c_k (t^{k-1} - t'^{k-1}) | \leq   DS(t,t') = | c_1(t-t') + c_2(t^2 - t'^2) + \cdots + c_k(t^k - t'^k) |$  when $|t-t'| \geq 1$; hence the less shifts.
Based on the proof shown in Appendix \ref{proof}, the derivation of the time domain is equivalent to the frequency domain derivation;   thus FDT also relieves distribution shifts.



\newpage
\section{Visualization} \label{appendix:visualization}
We perform additional visualization experiments to compare our model with FreTS under various experimental settings. The results are presented in Figure \ref{fig:ili-visualization-1} and Figure \ref{fig:ili-visualization-2}. Observing these figures, it becomes apparent that our model consistently aligns well with the ground truth, even when the time series distribution undergoes substantial shifts.

\begin{figure}[h]
    \centering
    \subfigure[\textsc{DeRiTS}]
    {
        \centering
        \includegraphics[width=0.45\linewidth]{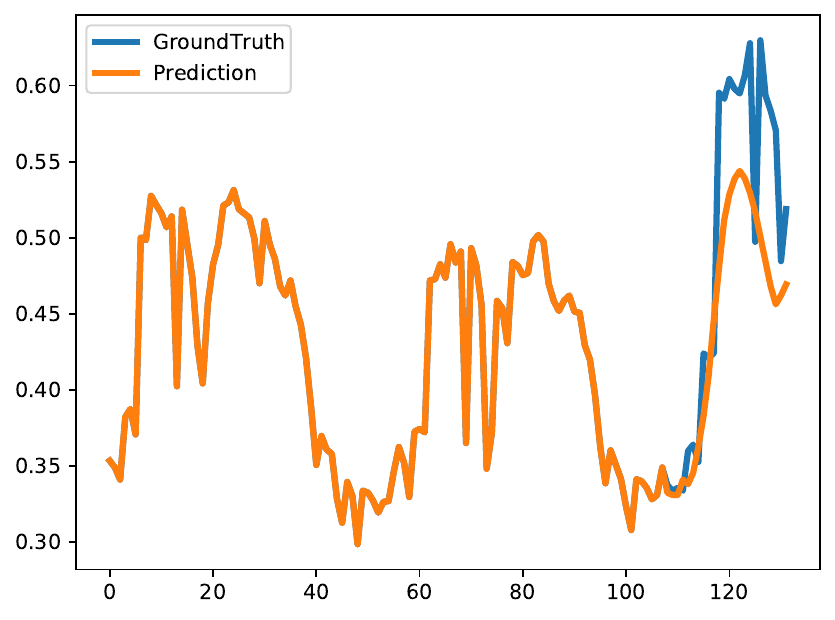}
        \label{N=24}
    }
    \subfigure[FreTS]
    {
        \centering
        \includegraphics[width=0.45\linewidth]{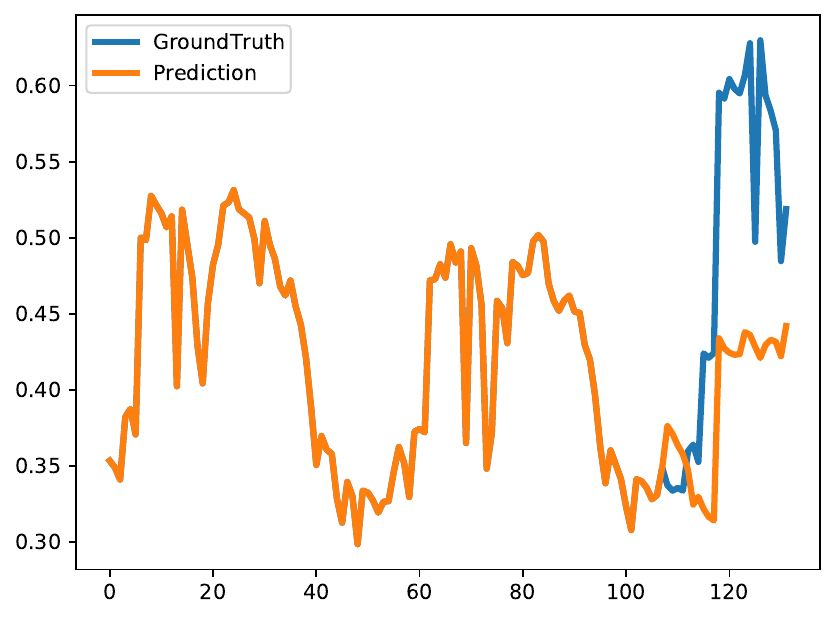}
        \label{N=29}
    }
    \vspace{-2mm}
    \caption{Visualizations of non-stationary forecasting (prediction vs. ground truth) on the ILI dataset with the lookback window length of 108 and a prediction length of 24.}
    \label{fig:ili-visualization-1}
    \vspace{-3mm}
\end{figure}

\begin{figure}[!h]
    \centering
    \subfigure[\textsc{DeRiTS}]
    {
        \centering
        \includegraphics[width=0.45\linewidth]{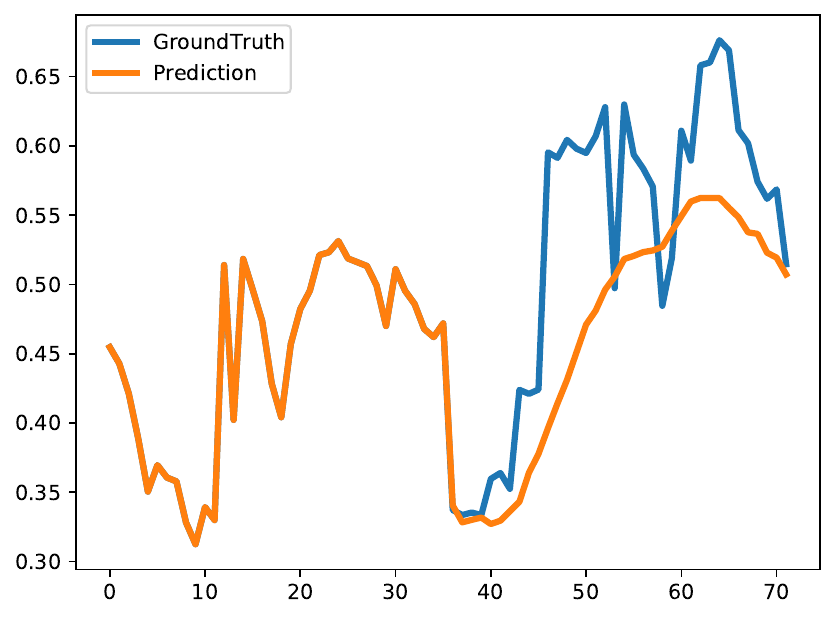}
        \label{N=24}
    }
    \subfigure[FreTS]
    {
        \centering
        \includegraphics[width=0.45\linewidth]{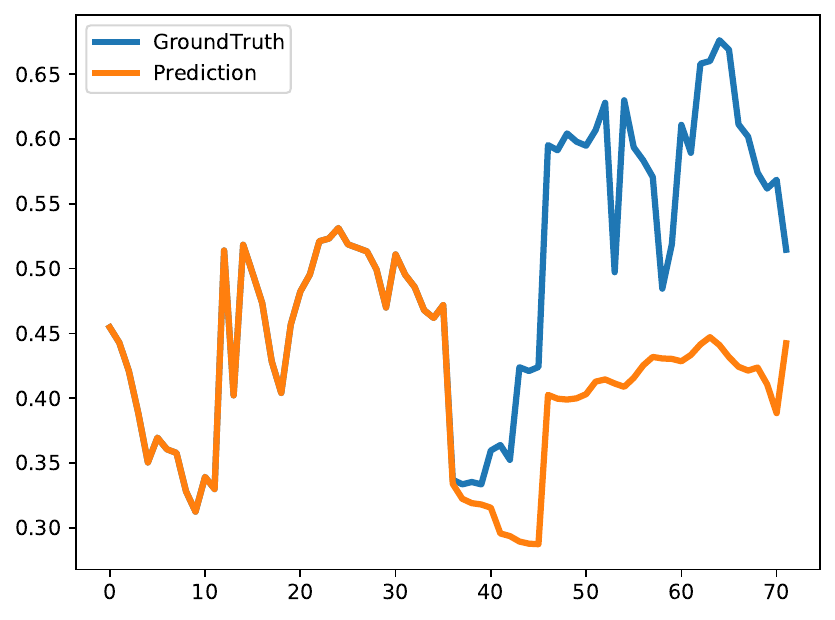}
        \label{N=29}
    }
    \vspace{-2mm}
    \caption{Visualizations of non-stationary forecasting (prediction vs. ground truth) on the ILI dataset with the lookback window length of 36 and a prediction length of 36.}
    \label{fig:ili-visualization-2}
    \vspace{-3mm}
\end{figure}

\clearpage
\bibliographystyle{named}
\bibliography{ijcai24}

\end{document}